\documentclass{article}
\usepackage{fullpage}
\usepackage[ruled,vlined]{algorithm2e}
\usepackage{xcolor}
\usepackage{comment}
\usepackage{metalogo}
\usepackage{enumerate}
\usepackage{mathtools,amssymb,amsmath} 
\usepackage{physics}
\usepackage{makecell}
\usepackage{makeidx}
\usepackage{isomath}
\usepackage[framemethod=TikZ]{mdframed}
\usepackage{enumitem}
\usepackage{relsize}
\usepackage{minitoc}
\usepackage{caption}
\usepackage{graphicx}
\allowdisplaybreaks

\usepackage[square,sort,comma,numbers]{natbib}
\usepackage{amsfonts}
\usepackage{xspace}
\usepackage{array}
\usepackage{multirow}
\usepackage{booktabs}
\usepackage{pgfplots}
\usepackage{pdfpages}
\usepackage{dsfont}
\newcommand{\probset}{\mathcal{P}}

\DeclareMathOperator*{\argmax}{argmax}
\DeclareMathOperator*{\argmin}{argmin}

\DeclareMathOperator*{\probmap}{m}
\newcommand{\probmapset}{\mathcal{M}}

\DeclareMathOperator*{\Risk}{\mathcal{R}}
\DeclareMathOperator*{\Riskemp}{\mathcal{R}_{\mathcal{S}}}
\DeclareMathOperator*{\advRisk}{\mathcal{R}^{adv}}
\DeclareMathOperator*{\advempRisk}{\mathcal{R}^{adv}_\fullSample}
\DeclareMathOperator*{\advRiskzero}{\mathcal{R}^{adv}_{>0}}

\newcommand{\expect}{\mathbb{E}}
\newcommand{\proba}{\mathbb{P}}
\DeclareMathOperator*{\dist}{dist}

\newcommand{\Rademacher}{\mathfrak{R}}
\DeclareMathOperator*{\diam}{diam}

\newcommand{\perturb}{\vectorsym{\tau}}
\newcommand{\inputspace}{ \mathcal{X}}
\newcommand{\inputelement}{\vectorsym{x}}

\newcommand{\R}{\mathbb{R}}
\newcommand{\groundDistrib}{\mathcal{D}}
\newcommand{\outputspace}{\mathcal{Y}}

\newcommand{\Hypothesisspace}{\mathcal{H}}
\newcommand{\hypothesis}{\vectorsym{h}}
\newcommand{\fullSample}{\mathcal{S}}
\newcommand{\equaldef}{:=}
\newcommand{\equationspace}{\thinspace}

\newcommand{\loss}{\mathcal{L}}
\newcommand{\zerooneloss}{\loss_{0/1}}
\newcommand{\zo}{0/1}

\newcommand{\ie}{\emph{i.e.}}
\newcommand{\eg}{\emph{e.g.}}
\newcommand{\aka}{\emph{a.k.a.}}

\newcommand{\st}{\emph{s.t.}}
\newcommand{\wrt}{\emph{w.r.t.}}
\newcommand{\lp}{\ell_p}

\newcommand{\linf}{\ell_\infty}

\newtheorem{proposition}{Proposition}
\newtheorem{theorem}{Theorem}

\newtheorem{corollary}{Corollary}
\newtheorem{lemma}{Lemma}
\newtheorem{definition}{Definition}

\newtheorem{remark}{Remark}
\newtheorem{proof}{Proof}

\title{On the robustness of randomized classifiers to adversarial examples \\
\begin{large}
Adversarial generalization through noise injection
\end{large}
}

\author{
    Rafael Pinot$^{*1}$ \qquad
	Laurent Meunier$^{*2,3}$\qquad Florian Yger$^{2}$\\  
	Cédric Gouy-Pailler$^{4}$ \qquad
	Yann Chevaleyre$^{2}$ \qquad
	Jamal Atif$^{2}$ \qquad
		\vspace{0.3cm}
\\
	$^{1}$ Ecole Polytechnique Fédérale de Lausanne\\
	$^{2}$ LAMSADE, Université Paris-Dauphine\\
	$^{3}$ Facebook AI Research, Paris\\
	$^{4}$ Institut LIST, CEA, Université Paris-Saclay\\}
	
\date{}	

\pgfplotsset{compat=1.16}	
\begin{document}

\maketitle

\begin{abstract}
This paper investigates the theory of robustness against adversarial attacks. We focus on randomized classifiers (\emph{i.e.} classifiers that output random variables) and provide a thorough analysis of their behavior through the lens of statistical learning theory and information theory. 
To this aim, we introduce a new notion of robustness for randomized classifiers, enforcing local Lipschitzness using probability metrics.
Equipped with this definition,  we make two  new contributions. The first one consists in devising a new upper bound on the adversarial generalization gap of randomized classifiers. More precisely, we devise bounds on the generalization gap and the adversarial gap (\emph{i.e.} the gap between the risk and the worst-case risk under attack) of randomized classifiers.  
The second contribution presents a yet simple but efficient noise injection method to design robust randomized classifiers. We show that our results are applicable to a wide range of machine learning models under mild hypotheses. We further corroborate our findings with experimental results using deep neural networks on standard image datasets, namely CIFAR-10 and CIFAR-100. All robust models we trained models can simultaneously achieve state-of-the-art accuracy (over $0.82$ clean accuracy on CIFAR-10) and enjoy \emph{guaranteed} robust accuracy bounds ($0.45$ against $\ell_2$ adversaries with magnitude $0.5$ on CIFAR-10).
\end{abstract}

\section{Introduction}
\label{section::Introduction}
In the last few years, there has been a growing concern on adversarial example attacks in machine learning. An adversarial attack refers to a small (humanly imperceptible) change of an input specifically designed to fool a machine learning model. These attacks have recently come to light thanks to works by~\cite{biggio2013evasion} and~\cite{Szegedy2013IntriguingPO} studying deep neural networks for image classification, although it was an existing topic in spam filter analysis~\cite{dalvi2004adversarial,lowd2005adversarial,globerson2006nightmare}.
The vulnerability of state-of-the-art classifiers to these attacks has genuine security implications especially for deep neural networks used in AI-driven technologies such as self-driving cars, as repetitively demonstrated by~\cite{sharif2016accessorize,sitawarin2018darts} and \cite{selfdrivingattack2020}. Besides security issues, this shows how little we know about the worst-case behaviors of models the industry uses daily. It is essential for the community to understand the very nature of this phenomenon in order to mitigate the threat.

Accordingly, a large body of works has been trying to design new models that would be less vulnerable to the adversarial setting~\cite{goodfellow2014explaining,metzen2017detecting,xie2018mitigating,hu2019new,NIPS2019_9070} but most of them were proven (in time) to offer only limited protection against more sophisticated attacks~\cite{carlini2017adversarial,he2017adversarial,obfuscated-gradients,croce2020reliable,tramer2020adaptive}. Among the defense strategies, randomization has proven effective in some contexts~\cite{Xie2017MitigatingAE,pruningDefenseICLR2018,Xuang2018,rakin2018parametricnoiseinjection}. Albeit these significant efforts, randomization techniques lack theoretical arguments. In this paper, we generalize the prior results from~Pinot et al. \cite{pinot2019theoretical} by studying a general class of randomized classifiers, including randomized neural networks, for which we demonstrate adversarial robustness guarantees and analyze their generalization properties.

%Few works have demonstrated adversarial generalization guarantees~\cite{khim2018adversarial,yin2019rademacher,awasthi2020adversarial}, but previous works only considered limited hypothesis classes such as linear classifiers of one-hidden layer neural networks. Furthermore, randomization has often been proposed as a defense against adversarial examples~\cite{pruningDefenseICLR2018,Xie2017MitigatingAE,pinot2019theoretical}. In this paper, we study a general class of randomized classifiers, including usual neural networks. We demonstrate adversarial robustness guarantees for this class and study its generalization properties.

% - we study the class of randomizd classifier 
% - we demonstrate robustness of randomized classifiers in the following sense ...  
% - generalization (deep neural networks)

% To the best of our knowledge, no previous works studied realistic hypothesis classes such as neural-networks. In this paper, we bring a new point of view on this problem and demonstrate such guarantees for a very generic class of randomized models, including deep neural networks. 

\subsection{Supervised learning for image classification in a nutshell} 

Let us consider the supervised classification problem with an input space $\inputspace$ and an output space $\outputspace$. In the following, w.l.o.g. we will consider $\inputspace \subset [-1,1]^d$ to be a set of images, and $\outputspace \equaldef [K] \equaldef \{1,\dots,K\}$ a set of labels describing them. The goal of a supervised machine learning algorithm is to design classifier that maps any image $\inputelement \in \inputspace$ to a label $y \in \outputspace$. To do so, the learner has access to a \emph{training sample} of $n$ image-label pairs $\fullSample\equaldef\{(\vectorsym{x_1},y_1),\dots ,(\vectorsym{x_n},y_n)\}$. Each training pair $(\vectorsym{x_i},y_i)$ is assumed to be drawn \emph{i.i.d.} from a ground-truth distribution $\groundDistrib$. To build a classifier, the usual strategy is to select a hypothesis function $\hypothesis: \mathcal{X} \rightarrow \mathcal{Y}$ from a pre-defined hypothesis class $\Hypothesisspace$ to minimize the \emph{risk} with respect to $\groundDistrib$. This risk minimization problem writes  
\begin{equation}
\label{eq:Initialproblem}
\inf_{\hypothesis \in \Hypothesisspace} \Risk(\hypothesis)  \equaldef \expect_{(\inputelement,y) \sim \groundDistrib}\left[ \zerooneloss\left(\hypothesis(\inputelement), y\right)\right] \equationspace,
\end{equation}
where $\zerooneloss$ represents the $\zo$ loss that outputs $1$ when $\hypothesis(\inputelement) \neq y$, and zero otherwise.

In practice, the learner does not have access to the ground-truth distribution; hence it cannot estimate the risk $\Risk(\hypothesis)$. To find an approximate solution for Problem~\eqref{eq:Initialproblem}, a learning algorithm solves the \emph{empirical risk minimization} problem instead. In this case, we simply replace the risk by its empirical counterpart over the training sample $\fullSample\equaldef\{(\vectorsym{x_1},y_1),\dots ,(\vectorsym{x_n},y_n)\}$. The empirical risk minimization problem writes
\begin{equation}
\inf_{\hypothesis \in \Hypothesisspace} \Riskemp(\hypothesis) \equaldef \frac1n \sum_{i=1}^{n} \zerooneloss\left(\hypothesis(\vectorsym{x_i}), y_i\right)\equationspace.\label{eq:EmpInitialproblem}
\end{equation}
Then, to evaluate how far the selected hypothesis is from the optimum, one wants to upper bound the difference between the risk and the empirical risk of any $\hypothesis \in \Hypothesisspace$. This difference is known as the \emph{generalization gap}. 

\subsection{Classification in the presence of an adversary}
Given a hypothesis $\hypothesis \in \Hypothesisspace$ and a sample $(\inputelement,y) \sim \groundDistrib$, the goal of an adversary is to find a perturbation $\vectorsym{\tau} \in \inputspace$ such that the following assertions \emph{both} hold. First, the perturbation is imperceptible to humans. This means that a human cannot visually distinguish the standard example $\inputelement$ from the \emph{adversarial example} $\inputelement + \perturb$. Second, the perturbation modifies $\inputelement$ enough to make the classifier misclassify. More formally, the adversary seeks a perturbation $\vectorsym{\tau} \in \inputspace$ such that $\hypothesis(\inputelement+\vectorsym{\tau}) \neq y$.

Although the notion of imperceptible modification is very natural for humans, it is genuinely hard to formalize. Despite these difficulties, in the image classification setting, a sufficient condition to ensure that the attack will remain undetected is to constrain the perturbation $\vectorsym{\tau}$ to have a small $\ell_p$ norm. This means that for any $p \in [1,\infty]$, there exists a threshold $\alpha_p > 0$ for which any perturbation $\perturb$ is imperceptible as soon as $\norm{\perturb}_p \leq \alpha_p$. The literature on adversarial attacks for image classification usually uses either an $\ell_\infty$ norm akin~\cite{madry2017towards} or an $\ell_2$ norm akin~\cite{carlini2017adversarial} as a surrogate for imperceptibility. Other authors such as~\cite{chen2018ead} and  \cite{papernot2016distillation} also used an $\ell_1$ norm or an $\ell_0$ semi-norm.

To account for adversaries possibly manipulating the input images, one needs to revisit the standard risk minimization by incorporating the adversary in the problem. The goal becomes to minimize the \emph{worst-case} risk under $\alpha_p$-bounded manipulations. We call this problem the \emph{adversarial risk minimization}. It writes
\begin{equation}
\inf_{\hypothesis \in \mathcal{H}} \advRisk(\hypothesis; \alpha_p) \equaldef \expect_{(\inputelement,y)\sim \mathcal{D}}\left[  \sup_{ \perturb \in B_p(\alpha_p)}\zerooneloss\left(\hypothesis(\inputelement + \vectorsym{\tau}), y\right)\right]\equationspace, \label{eq:Advproblem}
\end{equation}
where $B_p(\alpha_p) \equaldef\{ \tau \in \inputspace ~\st~ \norm{\perturb}_p \leq \alpha_p\}$. In this new formulation, the adversary focuses on optimizing the inner maximization, while the learner tries to get the best hypothesis from $\mathcal{H}$ ``under attack’’. By analogy with the standard setting, given $n$ training examples $\mathcal{S}:=\{(\vectorsym{x_1},y_1),\dots ,(\vectorsym{x_n},y_n)\}$, we want to find an approximate solution to the adversarial risk minimization by studying its empirical counterpart, the \emph{empirical adversarial risk minimization}. This optimization problem writes
\begin{equation}
\inf_{\hypothesis \in \Hypothesisspace} \advempRisk(\hypothesis;\alpha_p) \equaldef \frac1n\sum_{i=1}^{n} \sup_{\perturb \in B_p(\alpha_p)}\zerooneloss\left(\hypothesis(\vectorsym{x_i} + \vectorsym{\tau}), y_i\right)\equationspace. \label{eq:EmpAdvproblem}
\end{equation}
In the presence of an adversary, two major issues appear in the empirical risk minimization. First, as recently pointed out by \cite{madry2017towards}, the adversarial generalization error (\emph{i.e.} the gap between the empirical adversarial risk and the adversarial risk) can be much larger than in the standard setting. Indeed, the adversary makes the problem dependent on the dimension of $\inputspace$. Hence, in high-dimension (\emph{e.g.} for images) one needs much more samples to classify correctly as pointed out by~\cite{schmidt2018adversarially} as well as \cite{simon2019first}. Moreover, finding an approximate solution to the adversarial risk minimization is not always sufficient. Indeed, recent works by~\cite{tsipras2018robustness} and \cite{zhang2019theoretically} gave theoretical evidence that training a robust model may lead to an increase of its standard risk. Hence finding a good approximation for Problem~\eqref{eq:Advproblem} may lead to a poor solution for Problem~\eqref{eq:Initialproblem}. Accordingly, it is natural to wonder whether we can \textbf{\emph{find a class of models $\boldsymbol{\Hypothesisspace}$ for which we can control both the standard and adversarial risks?}}

In this paper, we provide answers to the above question by conducting an in depth analysis of a special class of models called randomized classifiers, \emph{i.e.} classifiers that output random variables instead of labels. Our main contributions summarize as follows. 

\subsection{Contributions}
\label{sec:contrib}

Our first contribution consists in studying randomized classifiers. By analogy with the deterministic case, we define a notion of robustness for randomized classifiers. This definition amounts to making the classifier locally Lipschitz with respect to the $\ell_p$ norm on $\inputspace$, and a probability metric on $\outputspace$ (\emph{e.g.} the total variation distance or the Renyi divergence). More precisely, if we denote $D$ the probability metric at hand, a randomized classifier $\probmap$ is called $(\alpha_p, \epsilon)$-robust \wrt~$D$ if for any $\inputelement,\inputelement' \in \mathcal{X}$
$$ \norm{\inputelement - \inputelement'}_p \leq \alpha_p \implies D(\probmap(\inputelement),\probmap(\inputelement')) \leq \epsilon.$$
Denoting $\probmapset_D(\alpha_p,\epsilon)$ the class of randomized classifiers that respect this local Lipschitz condition, we present the following results. 
\begin{enumerate}
    \item If $D$ is either the total variation distance or the Renyi divergence, we show that for any $\probmap \in \probmapset_D(\alpha_p,\epsilon)$, we can upper-bound the gap between the risk and the adversarial risk of $\probmap$. Notably, if $D$ is the total variation distance, for any $\probmap \in \probmapset_D(\alpha_p,\epsilon)$ we have $ \advRisk(\probmap;\alpha_p) - \Risk(\probmap) \leq \epsilon$. Hence, $\epsilon$ controls the maximal trade-off between robust and standard accuracy for locally Lipschitz randomized classifier. We demonstrate similar results when $D$ is the Renyi divergence showing that $\advRisk(\probmap;\alpha_p) - \Risk(\probmap) \leq 1- O\left(e^{-\epsilon}\right)$. This means that, for the class of locally Lipschitz randomized classifiers, solving the risk minimization problem, \ie~Problem~\eqref{eq:Initialproblem}, gives an approximate solution to the adversarial risk minimization problem, \ie~Problem~\eqref{eq:Advproblem}, up to an additive factor that depends on the robustness parameter $\epsilon$. 
    
    \item We devise an upper-bound on the generalization gap of any $\probmap$ in $\probmapset_D(\alpha_p,\epsilon)$. In particular, when $D$ is the total variation distance, we demonstrate that for any $\probmap \in \probmapset_D(\alpha_p,\epsilon)$ we have $$\Risk(\probmap) - \Riskemp(\probmap) \leq O\left(\sqrt{\frac{N \times K}{n}}\right) + \epsilon,$$ where $N$ is the external $\alpha_p$-covering number of the input samples. This means that, when $N/n \underset{n \rightarrow \infty}{\rightarrow} 0$, solving the empirical risk minimization problem, \ie~Problem~\eqref{eq:EmpInitialproblem}, on $\probmapset_D(\alpha_p,\epsilon)$ provides an approximate solution to the risk minimization problem, \ie~Problem~\eqref{eq:Initialproblem}. Since we can also bound the gap between the adversarial and the standard risk, we can combine the two results to bound the adversarial generalization gap on $\probmapset_D(\alpha_p,\epsilon)$. Note however, that this result relies on a strong assumption on $\inputspace$ that does not always avoid dimensionality issues. The problem of finding a subclass of $\probmapset_D(\alpha_p,\epsilon)$ that provides tighter generalization bounds is an open question.
\end{enumerate}

For our second contribution, we present a practical way to design this class $\probmapset(\alpha_p,\epsilon)$ by using a simple yet efficient noise injection scheme. This allows us to build randomized classifiers from state-of-the-art machine learning models, including deep neural networks. More precisely our contribution is as follows.

\begin{enumerate}
    \item Based on information-theoretic properties of the total variation distance and the Renyi divergence (\eg~the data processing inequality) we design a noise injection scheme to turn a state-of-the-art machine learning model into a robust randomized classifier. More formally, Let us denote $\Phi$ the c.d.f. of a standard Gaussian distribution. Let us consider $\hypothesis$ a deterministic hypothesis, we show that the randomized classifier $\probmap: \inputelement \mapsto \hypothesis\left(\inputelement+n\right)$ with $n\sim\mathcal{N}(0, \sigma^2 I_d)$ is both $(\alpha_2, \frac{(\alpha_2)^2}{2 \sigma})$-robust \wrt~the Renyi divergence and $(\alpha_2,\ 2 \Phi\left( \frac{\alpha_2}{2 \sigma} \right) - 1)$-robust \wrt~the total variation distance. Our results on randomized classifiers are applicable to a wide range of machine learning models including deep neural networks.
    
    \item We further corroborate our theoretical results with experiments using deep neural networks on standard image datasets, namely CIFAR-10 and CIFAR-100~\cite{krizhevsky2009learning}. These models can simultaneously provide accurate prediction (over $0.82$ clean accuracy on CIFAR-10) and reasonable robustness against $\ell_2$ adversarial examples ($0.45$ against $\ell_2$ adversaries with magnitude $0.5$ on CIFAR-10). 
\end{enumerate}

\section{Related Work}
\label{section::RW}

Contrary to other notions such as training corruption, \aka~poisoning attacks~\cite{kearns1993learning,kearns1994toward}, the theoretical study of adversarial robustness is still in its infancy. So far, empirical observations tend to show that 1) adversarial examples on state-of-the-art models are hard to mitigate and 2) robust training methods give poor generalization performances. Some recent works started to study the problem through the lens of learning theory either to understand the links between robustness and accuracy or to provide bounds on the generalization gap of current learning procedures in the adversarial setting.

\subsection{Accuracy vs robustness trade-off}
\label{sec:AccRobTradeoff}
 
A first line of research~\cite{su2018robustness,10.5555/3327546.3327734,tsipras2018robustness} suggests that designing robust models might be inconsistent with standard accuracy. These works argue with experiments and toy examples that robust and standard classification are two concurrent problems. Following this line, \cite{zhang2019theoretically} observed that the adversarial risk of any hypothesis $\hypothesis$ decomposes as follows,
\begin{equation}
\advRisk(\hypothesis;\alpha_p) =  \Risk(\hypothesis) +  \advRiskzero(\hypothesis;\alpha_p),
\label{eq:decomposition}
\end{equation} 
where $\advRiskzero(\probmap;\alpha_p)$ is the amount of risk that the adversary gets with \emph{non-null} perturbations. Looking at Equation~\eqref{eq:decomposition}, we realize that minimizing the adversarial risk is not enough to control standard accuracy, as one could only optimize over the second term. This indicates that adversarial risk minimization, \ie~Problem~\eqref{eq:Advproblem}, is harder to solve than the standard risk minimization, \ie~Problem~\eqref{eq:Initialproblem}. 

While this indicates that both goals maybe difficult be achieve simultaneously, Equation~\eqref{eq:decomposition}, along with the empirical studies from the literature  do not highlight any fundamental trade-off between robustness and accuracy. Moreover, no upper-bound on $\advRiskzero(\hypothesis;\alpha_p)$ has been demonstrated yet. Hence the questions whether this trade-off exists and can be controlled remain open. In this paper, we provide a rigorous answer to these questions by identifying classes $\probmapset_D(\alpha_p,\epsilon)$ of randomized classifiers for which we can upper bound the trade-off term $\advRiskzero(\probmap ;\alpha_p)$ for any $\probmap \in \probmapset_D(\alpha_p,\epsilon)$. This shows that for some classes of randomized classifiers, precision is not conflicting with robustness, since we can control the maximum loss of accuracy that the model can suffer in the adversarial setting. It also challenges the intuitions developed by previous works~\cite{su2018robustness,10.5555/3327546.3327734,tsipras2018robustness} and argues in favor of using randomized mechanisms as a defense against adversarial attacks.

\subsection{Studying adversarial generalization}

To further compare the hardness of the two problems, a recent line of research began to explore the notion of adversarial generalization gap. In this line, \cite{schmidt2018adversarially} presented some first intuitions by studying a simplified binary classification framework where $\groundDistrib$ is a mixture of multi-dimensional Gaussian distributions. In this framework the authors show that without attacks,  we only need $O(1)$ training samples to have a small generalization gap.
But against an $\linf$ adversary, we need $O(\sqrt{d})$ training samples instead. In the discussion of their work, the authors present the problem of obtaining similar results without making any assumption about the distribution as an open problem.

This issue was recently studied using the Rademacher complexity by \cite{khim2018adversarial,yin2019rademacher} and \cite{awasthi2020adversarial}. These papers relate the adversarial generalization error of linear classifiers and one-hidden layer neural networks with the dimension of the problem. They show that the adversarial generalization depends on the dimension of the problem. At a first glance, the difficulty of adversarial generalization seems to contradict previous conclusions on the link between robustness and generalization presented by~\cite{xu2012robustness}. But, as we will discuss in the sequel, these results assume that the input space $\mathcal{X}$ can be partitioned in $O(1)$ sub-space in which the classification function has small variations. This assumption may not always hold when dealing with high dimensional input spaces (\eg~images) and very sophisticated classification algorithms (\eg~deep neural networks).

Going further, it should be noted that the generalization gap
measures only the difference between empirical and theoretical risks. In practice, the empirical adversarial risk is hard to estimate, since we cannot compute the exact solution to the inner maximization problem. The following question therefore remains open: even if we can set up a learning procedure with a controlled generalization gap, can we give guarantees on the standard and adversarial risks? In this paper, we start answering this question by providing techniques that provably offer both small standard risk and reasonable robustness against adversarial examples (see Section~\ref{sec:contrib} for more details).

%\textcolor{blue}{Say something like: most of existing works focus on solving directly problem (\ref{eq::EmpAdvproblem}) and disregard the gap generalization gap with problem (\ref{eq::Advproblem}). (see e.g. defense against adversarial examples). Some works did specifically studied the adversarial generalization gap in a some specififc settings, but did not evaluate the difference between problem (1) and (2)}

\subsection{Defense against adversarial examples based on noise injection}

Injecting noise into algorithms to improve train time robustness has been used for ages in detection and signal processing tasks~\cite{ZozoA99,ChapR04,MitaK98,grandvalet1997noise}. It has also been extensively studied in several machine learning and optimization fields, \eg~robust optimization~\cite{ben2009robust} and data augmentation techniques~\cite{Perez2017TheEO}. Concurrently to our work, noise injection techniques have been adopted by the adversarial defense community under the \emph{randomized smoothing} name. The idea of provable defense through noise injection was first proposed by \cite{lecuyer2019certified} and refined by \cite{li2019certified,KolterRandomizedSmoothing} and \cite{salman2019provably}. The rational behind randomized smoothing is very simple: smooth $\hypothesis$ \emph{after training} by convolution with a Gaussian measure to build a more stable classifier. Our work belongs to the same line of research, but the nature of our results is different. While randomized smoothing focuses on the construction of certified defenses, depending on the dataset and the classifier at hand, we study the generalization properties of randomized mechanisms both in the standard and the adversarial setting. Our analysis presents the fundamental properties of randomized defenses, including (but not limited to) randomized smoothing (c.f. Section~\ref{sec:modepreservationendRS}).

\section{Definition of Risk and Robustness for Randomized classifiers}
\label{section::RiskforRandomClassifiers}
In this work, the goal is to analyze how randomized classifiers can solve the problem of classification in the presence of an adversary. Let us start by defining what we mean by randomized classifiers.

\begin{remark}[Remark on measurability] 
Through the paper, we assume every spaces $\mathcal{Z}$ to be associated with a $\sigma$-algebra denoted $\mathcal{A}\left( \mathcal{Z}\right)$. Furthermore, we denote $\probset\left(\mathcal{Z} \right)$ the set of probability distributions defined on the measurable space $\left(\mathcal{Z},\mathcal{A}\left(\mathcal{Z}\right)\right)$. In the following, for simplicity, we  refer to $\mathcal{A}\left(\mathcal{Z}\right)$ only when necessary.
\end{remark}

\begin{definition}[Probabilistic mapping]
\label{def::ProbMapping}
Let $\mathcal{Z}$ and $\mathcal{Z}'$ be two arbitrary spaces. A \emph{probabilistic mapping} from $\mathcal{Z}$ to $\mathcal{Z}'$ is a mapping $\probmap: \mathcal{Z} \rightarrow \probset\left(\mathcal{Z}' \right)$, where $ \probset\left(\mathcal{Z}' \right)$ is the space of probability measures on $\mathcal{Z}'$. 
When $\mathcal{Z} = \inputspace$ and $\mathcal{Z}' =\mathcal{Y}$, $\probmap$ is called a \emph{randomized classifier}. To get a numerical answer out of $\probmap$ for an input $\inputelement$, we sample $\hat{y} \sim \probmap( \inputelement )$.
\end{definition}

Any mapping can be considered as a probabilistic mapping, whether it explicitly considers randomization or not. In fact, any deterministic classifier can be considered as a randomized one, since it can be characterized by a Dirac measure. Accordingly, the definition of a randomized classifier is fully general and equally consider classifiers with or without randomization scheme.

\subsection{Risk and adversarial risk for randomized classifiers}

To analyze this new hypothesis class, we can adapt the concepts of risk and adversarial risk for a randomized classifier. The loss function we use is the natural extension of the $\zo$ loss to the randomized regime. Given a randomized classifier $\probmap$ and a sample $(\inputelement,y) \sim \groundDistrib$ it writes
\begin{align}
    \zerooneloss(\probmap(\inputelement),y) := \expect_{\hat{y} \sim \probmap(\inputelement)}  \left[ \mathds{1} \left\{\hat{y} \neq y\right\}\right] .
\end{align}
This loss function evaluates the probability of misclassification of $\probmap$ on a data sample $(\inputelement,y) \sim \groundDistrib$. Accordingly, the risk of $\probmap$ with respect to $\groundDistrib$ writes
\begin{align}
\Risk(\probmap) &:=  \expect_{(\inputelement,y)\sim \groundDistrib}\left[ \zerooneloss(\probmap( \inputelement),y)   \right].
\end{align}
Finally, given $\probmap$ and $(\inputelement,y) \sim \groundDistrib$, the adversary seeks a perturbation $\perturb \in  B_p(\alpha_p)$ that maximizes the expected error of the classifier on $\inputelement$ (\emph{i.e.} $\expect_{\hat{y} \sim \probmap(\inputelement + \perturb)}  \left[ \mathds{1} \left\{\hat{y} \neq y\right\}\right]$). Therefore, the adversarial risk of $\probmap$ under $\alpha_p$-bounded perturbations writes
\begin{align}
\advRisk(\probmap;\alpha_p) &:= \expect_{(\inputelement,y)\sim \groundDistrib}\left[ \sup_{  \perturb \in B_p(\alpha_p)} \zerooneloss(\probmap(\inputelement + \perturb),y)  \right].
\end{align}

By analogy with the deterministic setting, we denote $\Riskemp\left( \probmap \right) \equaldef \frac1n\sum_{i=1}^n \zerooneloss \left(\probmap(\inputelement_i), y_i\right)$ and $\advempRisk \left( \probmap ; \alpha_p \right):= \frac1n\sum_{i=1}^n \sup_{\perturb \in B_p(\alpha_p)}\zerooneloss \left(\probmap(\vectorsym{x_i} + \perturb), y_i\right)$ the empirical risks of $\probmap$ for a given training sample $\mathcal{S}:=\{ (\vectorsym{x_1},y_1), \dots , (\vectorsym{x_n},y_n) \}$. %\footnote{We do not introduce the notion of empirical adversarial risk, since one can not evaluate it in practise. One should refer to Section~\ref{section::RW} and Section~\ref{section::GeneralizationBoundAdvGap} for more details on this matter.}
%Since the definition of randomized classifier encapsulate the classical definition, hereafter, we only consider randomized classifiers. We only emphasize the deterministic nature of the objects when necessary.

\subsection{Robustness for  randomized classifiers} 
We could define the notion of robustness for a randomized classifier depending on whether it misclassifies any test sample $(\inputelement,y) \sim \groundDistrib$. But in practice, neither the adversary nor the model provider have access to the ground-truth distribution $\groundDistrib$. Furthermore, in real-world scenarios, one wants to check before its deployment that the model is robust. Therefore, it is required for the classifier to be stable on the regions of the space where it already classifies correctly. Formally a (deterministic) classifier $c: \inputspace \rightarrow \mathcal{Y}$ is called \emph{robust} if for any $(\inputelement, y) \sim \groundDistrib$ such that $c(\inputelement) = y$, and  for any $\perturb \in \inputspace $ one has  
\begin{equation}
    \norm{ \perturb }_p \leq \alpha_p \implies c(\inputelement) = c(\inputelement + \perturb ). 
\end{equation}
By analogy with this notion, we define robustness for a randomized classifier as follows.
\begin{definition}[Robustness for a randomized classifier]
A randomized classifier $\probmap: \inputspace \rightarrow \probset(\mathcal{Y})$ is called $(\alpha_p,\epsilon)$-\emph{robust} w.r.t. $D$ if for any $\inputelement, \perturb \in \inputspace$, one has
\begin{align*}
 \norm{\perturb}_p \leq \alpha_p \implies D\left(\probmap(\inputelement)  ,  \probmap(\inputelement + \perturb)\right) \leq \epsilon \equationspace.   
\end{align*}
Where $D$ is a metric/divergence between two probability measures. Given such a metric/divergence $D$, we denote $\probmapset_{D}(\alpha_p,\epsilon)$ the set of all randomized classifiers that are $(\alpha_p,\epsilon)$-\emph{robust} w.r.t. ~$D$.
\end{definition}
%\yann{Remarque (Yann): Si on choisit la KL $D_1$ et que pour tout $x$ et pour tout $\tau$ de norme $\alpha_p$ on a $D_1(m(x),m(x+\tau))\le \epsilon$, alors on a aussi le symmétrique $D_1(m(x+\tau),m(x))\le \epsilon$ ce qui veut dire qu'on avec aussi avec la divergence de Jensen-Shanon $D_{JS}(m(x),m(x+\tau))\le \epsilon$. Plus globalement, toutes nos divergences sont symmétrisables}

Note that we did not add the constraint that $\probmap$ classifies well on $(\inputelement,y) \sim \groundDistrib$, since it is already encompassed in the probability distribution itself. If the two probabilities $\probmap(\inputelement)$ and $\probmap(\inputelement + \perturb)$ are close, and if $\probmap(\inputelement)$ outputs $y$ with high probability, then it will be the same for $\probmap(\inputelement + \perturb)$. This formulation naturally raises the question of the choice of the metric $D$. Any choice of metric/divergence will instantiate a notion of adversarial robustness, and it should be carefully selected. In the present work, we focus our study on the total variation distance and the Renyi divergence. The question whether these metrics/divergences are more appropriate than others remains open but these two divergences are sufficiently general to cover a wide range of other definitions (see Appendix~\ref{appendix::discussion} for more details). Furthermore, these notions of distance comply with both a theoretical analysis (Section~\ref{section::GeneralizationBoundAdvGap}) and practical considerations (Section~\ref{section::Experiments}).

\subsection{Divergence and metrics between probability measures.}
\label{section::Preliminaries}

Let us now recall the definition of total variation distance and Renyi divergence. Let $\mathcal{Z}$ be an arbitrary space, and $\rho$, $\rho'$ be two measures in $\probset(\mathcal{Z})$\footnote{Recall from Definition~\ref{def::ProbMapping} that $\probset(\mathcal{Z})$ is the set of probability measures on $\mathcal{Z}$}. 
The \emph{total variation distance} between $\rho$ and $\rho' $ is
\begin{align} D_{TV}\left(\rho  , \rho' \right) := \sup\limits_{Z \subset \mathcal{A} (\mathcal{Z})} |\rho (Z) - \rho' (Z)| \enspace,
\end{align}
where $\mathcal{A}(\mathcal{Z})$ is the $\sigma$-algebra associated with the set of measures $\probset(\mathcal{Z})$.
The total variation distance is one of the most commonly used probability metrics. It admits several very simple interpretations, and is a very useful tool in many mathematical fields such as probability theory, Bayesian statistics or optimal transport~\cite{villani2003topics,robert2007bayesian,peyre2019computational}. In optimal transport, it can be rewritten as the solution of the Monge-Kantorovich problem with the cost function $\text{cost}(\vectorsym{z},\vectorsym{z}') =\mathds{1}\left\{ \vectorsym{z}\neq \vectorsym{z}'\right\}$,
\begin{equation}
    D_{TV}(\rho  , \rho' ) = \inf\int_{\mathcal{Z}^{2}}\mathds{1}\left\{ \vectorsym{z} \neq \vectorsym{z}'\right\} d\pi(\vectorsym{z},\vectorsym{z}') \enspace,
\end{equation} 
where the infimum is taken over all joint probability measures $\pi$ in $\probset\left( \mathcal{Z}\times\mathcal{Z} \right)$ with marginals $\rho$ and $\rho' $. According to this interpretation, it seems quite natural to consider the total variation distance as a relaxation of the trivial distance on $[0,1]$ (for deterministic classifiers). 

Let us now suppose that $\rho$ and $\rho'$ admit probability density functions $g$ and $g'$ according to a third measure $\nu$. Then the \emph{Renyi divergence of order $\beta$} between $\rho$ and $\rho'$ writes
\begin{align}
D_{\beta}\left(\rho  , \rho' \right):=\cfrac{1}{\beta -1}\log \int_{\mathcal{Y}} g' (y)  \left(\cfrac{g(y)}{g' (y)}\right)^{\beta} d\nu(y)\enspace.
\end{align}
The Renyi divergence~\cite{renyi1961} is a generalized divergence defined for any $\beta$ on the interval $[1,\infty]$.  It equals the Kullback-Leibler divergence when $\beta \rightarrow 1$, and the maximum divergence when $\beta \rightarrow \infty$. It also has the property of being non-decreasing with respect to $\beta$. This divergence is very common in machine learning and Information theory~\cite{6832827}, especially in its Kullback-Leibler form as it is widely used as the loss function, \ie~cross entropy, of classification algorithms. In the remaining, we denote $\probmapset_{\beta}\left(\alpha_p,\epsilon\right)$ the set of $(\alpha_p,\epsilon)$-robust classifiers w.r.t. $D_{\beta}$.

Let us now give some properties of these divergences that will be useful for our analysis. First we recall the probability preservation property of the Renyi divergence, first presented by~\cite{langlois2014gghlite}.
\begin{proposition}[\cite{langlois2014gghlite}] 
\label{prop::renyi}
Let $\rho$ and $\rho' $ be two measures in $\probset(\mathcal{Z})$. Then for any $Z \in \mathcal{A}(\mathcal{Z})$, the following holds, 
\begin{equation*}
  \rho(Z)\leq \left(\exp\left(D_{\beta}(\rho  , \rho' )\right) \rho' (Z)\right)^{\frac{\beta -1}{\beta}}.
\end{equation*}
\end{proposition}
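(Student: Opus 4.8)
The plan is to reduce the statement to a single application of Hölder's inequality, using the density representation of the Rényi divergence. I would write $\rho(Z) = \int_Z g\, d\nu$, and exponentiate the definition of $D_\beta$ to record that $\exp\left((\beta-1) D_\beta(\rho,\rho')\right) = \int_{\mathcal{Z}} \frac{g^\beta}{(g')^{\beta-1}}\, d\nu$. The key algebraic observation is to factor the integrand $g$ so that it splits into the two pieces $g^\beta/(g')^{\beta-1}$ and $g'$ that appear, respectively, inside the divergence and inside $\rho'(Z)$.

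Concretely, on the support of $g'$ I would split $g = \frac{g}{(g')^{(\beta-1)/\beta}} \cdot (g')^{(\beta-1)/\beta}$, and then apply Hölder's inequality on $Z$ with the conjugate exponents $\beta$ and $\frac{\beta}{\beta-1}$ (legitimate since $\beta > 1$). This yields
$$\rho(Z) \leq \left(\int_Z \frac{g^\beta}{(g')^{\beta-1}}\, d\nu\right)^{1/\beta} \left(\int_Z g'\, d\nu\right)^{(\beta-1)/\beta}.$$
To close, I would bound the first factor by enlarging the domain of integration from $Z$ to all of $\mathcal{Z}$ (the integrand is nonnegative), so that it is at most $\exp\left(\frac{\beta-1}{\beta} D_\beta(\rho,\rho')\right)$ by the definition of the divergence, and identify the second factor as exactly $\rho'(Z)^{(\beta-1)/\beta}$. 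Multiplying the two and regrouping the common exponent $\frac{\beta-1}{\beta}$ gives the claimed inequality.

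The main subtlety — rather than a genuine obstacle — will be the bookkeeping around the support of $g'$, since the factorization divides by $g'$. I would therefore restrict attention to $\{g' > 0\}$ and note that on $\{g' = 0\}$ either $g = 0$ $\nu$-almost everywhere, or else $\int g^\beta/(g')^{\beta-1}\,d\nu = \infty$ so that $D_\beta(\rho,\rho') = \infty$ and the bound holds vacuously. The choice of the conjugate pair $(\beta, \tfrac{\beta}{\beta-1})$ is exactly what makes the two exponents $1/\beta$ and $(\beta-1)/\beta$ align with the density representation, so once that split is identified the remainder is routine. The boundary regimes $\beta \to 1$ (Kullback–Leibler) and $\beta \to \infty$ (max-divergence) are outside the range where Hölder applies directly, but the statement is used for $\beta \in (1,\infty)$ and could be recovered in the limit by monotonicity of $D_\beta$ in $\beta$ if needed.
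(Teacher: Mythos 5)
Your proof is correct. The paper itself states Proposition~\ref{prop::renyi} without proof, citing \cite{langlois2014gghlite}, and your argument --- splitting $g = \bigl(g/(g')^{(\beta-1)/\beta}\bigr)\,(g')^{(\beta-1)/\beta}$ on $Z$, applying H\"older with conjugate exponents $\beta$ and $\tfrac{\beta}{\beta-1}$, and enlarging the first integral from $Z$ to $\mathcal{Z}$ --- is exactly the standard proof of this probability preservation property found in that reference, with the support-of-$g'$ bookkeeping handled appropriately.
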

Now thanks to previous works by~\cite{5605338} and~\cite{Vajda1970}, we also get the following results relating the total variation distance and the Renyi divergence.

\begin{proposition}[Inequality between total variation and Renyi divergence]
\label{prop:Inequality-TV-Renyi}
Let $\rho$ and $\rho' $ be two measures in $\probset(\mathcal{Z})$, and $\beta\geq1$. Then the following holds,
$$D_{TV}(\rho  , \rho' ) \leq \min \left(\frac{3}{2}\left(\sqrt{1 + \frac{4 D_{\beta}(\rho  , \rho' )}{9}} - 1\right)^{1/2} ,\  \frac{\exp\left(D_{\beta}(\rho  , \rho' ) +1 \right) -1}{\exp \left(D_{\beta}(\rho  , \rho' ) +1 \right) +1} \right).$$
\end{proposition}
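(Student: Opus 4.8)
The plan is to prove each of the two arguments of the $\min$ separately, and in both cases to reduce the problem to a comparison between the total variation distance and the Kullback--Leibler divergence $D_1$ (the $\beta \to 1$ limit of $D_\beta$), then transfer the bound to $D_\beta$ using the monotonicity of the Renyi divergence in its order. Concretely, the paper already records that $\beta \mapsto D_\beta$ is non-decreasing and that $D_\beta \to D_1$ as $\beta \to 1$, so $D_1(\rho,\rho') \le D_\beta(\rho,\rho')$ for every $\beta \ge 1$. It therefore suffices to exhibit two increasing functions $\phi_1,\phi_2$ with $D_1(\rho,\rho') \ge \phi_i\!\left(D_{TV}(\rho,\rho')\right)$; inverting gives $D_{TV} \le \phi_i^{-1}(D_1) \le \phi_i^{-1}(D_\beta)$, where the last step uses that $\phi_i^{-1}$ is increasing together with $D_1 \le D_\beta$. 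The quantities $\phi_i^{-1}(D_\beta)$ will be exactly the two terms in the statement, and taking the minimum concludes.

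For the first term I would invoke the refined Pinsker inequality of \cite{5605338}, namely $D_1(\rho,\rho') \ge 2\,D_{TV}(\rho,\rho')^2 + \frac{4}{9}\,D_{TV}(\rho,\rho')^4$. Writing $t = D_{TV}(\rho,\rho')^2$, this reads $D_1 \ge \frac{4}{9}t^2 + 2t$, a quadratic lower bound in $t$; solving $\frac{4}{9}t^2 + 2t - D_1 \le 0$ for the non-negative root yields $t \le \frac{9}{4}\left(\sqrt{1 + \frac{4 D_1}{9}} - 1\right)$, hence $D_{TV} = \sqrt{t} \le \frac{3}{2}\left(\sqrt{1 + \frac{4 D_1}{9}} - 1\right)^{1/2}$. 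Since $D \mapsto \frac{3}{2}\left(\sqrt{1 + \frac{4 D}{9}} - 1\right)^{1/2}$ is increasing, substituting $D_1 \le D_\beta$ produces the first argument of the $\min$.

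For the second term I would use Vajda's tight bound \cite{Vajda1970}, $D_1(\rho,\rho') \ge \log\frac{1 + D_{TV}}{1 - D_{TV}} - \frac{2\,D_{TV}}{1 + D_{TV}}$, and then discard the subtracted term after noting that $\frac{2\,D_{TV}}{1 + D_{TV}} \le 1$ for every $D_{TV} \in [0,1]$; this gives the cleaner estimate $D_1 \ge \log\frac{1 + D_{TV}}{1 - D_{TV}} - 1$. Rearranging $e^{D_1 + 1} \ge \frac{1 + D_{TV}}{1 - D_{TV}}$ (legitimate since $1 - D_{TV} > 0$) and solving for $D_{TV}$ gives $D_{TV} \le \frac{e^{D_1 + 1} - 1}{e^{D_1 + 1} + 1}$, and again monotonicity of $D \mapsto \frac{e^{D+1}-1}{e^{D+1}+1}$ lets me replace $D_1$ by $D_\beta$. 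Both estimates hold at once, so their minimum bounds $D_{TV}$, which is the claim. I expect the main obstacle to be bookkeeping rather than depth: I must check that each function being inverted is genuinely increasing and invertible on the relevant range, so that inversion preserves the inequalities, and that relaxing the $D_1$ bound to a $D_\beta$ bound only weakens the right-hand side. This last point is exactly where the non-decreasing property of the Renyi divergence is indispensable, and it is the one step that would silently fail if orders $\beta < 1$ were permitted.
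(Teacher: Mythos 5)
Your proof is correct and follows essentially the same route as the paper's: both rest on the refined Pinsker inequality of \cite{5605338} and Vajda's inequality \cite{Vajda1970} applied at $\beta = 1$, then transfer to general $\beta \geq 1$ via the monotonicity of the Renyi divergence in its order. The only differences are expository --- you spell out the quadratic inversion, verify that the inverted functions are increasing, and recover the paper's cited form of Vajda's bound from its sharper version by discarding the term $\frac{2 D_{TV}}{1 + D_{TV}} \leq 1$.
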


\begin{proof}
Thanks to~\cite{5605338}, one has 
\begin{align*}
    & D_{1}(\rho  , \rho') \geq 2D_{TV}(\rho  , \rho')^{2}+ \frac{4D_{TV}(\rho  , \rho')^{4}}{9}.
    \intertext{From which it follows that}
    & D_{TV}(\rho  , \rho') \leq \frac{3}{2}\left(\sqrt{1 + \frac{4D_{1}(\rho  , \rho')}{9}} - 1\right)^{1/2}.
    \intertext{Moreover, using inequality from~\cite{Vajda1970}, one gets}
    & D_{1}(\rho  , \rho') +1 \geq \log\left(\frac{1 + D_{TV}(\rho  , \rho')}{1 - D_{TV}(\rho  , \rho')} \right).
    \intertext{This inequality leads to the following}
    &\frac{\exp(D_{1}(\rho  , \rho') +1) -1}{\exp(D_{1}(\rho  , \rho') +1) +1} \geq  D_{TV}(\rho  , \rho').
\end{align*}
By combining the above inequalities and by monotony of Renyi divergence regarding $\beta$, one obtains the expected result.
\end{proof}

From now on, we denote $\probmapset_{TV}\left(\alpha,\epsilon\right)$ and  $\probmapset_{\beta}\left(\alpha,\epsilon\right)$ the set of $(\alpha,\epsilon)$-\emph{robust} classifiers respectively for $D_{TV}$ and $D_{\beta}$. The next section gives bounds on the generalization gap in the standard and the adversarial settings for these specific hypothesis classes. 

\section{Risks' gap and Generalization gap for randomized classifiers}
\label{section::LearningAdversarialGap}

As discussed in Section~\ref{sec:AccRobTradeoff}, we can always decompose the adversarial risk of a classifier $\advRisk(\probmap;\alpha_p)$ in two terms. First the standard risk $\Risk(\probmap)$  and second the amount of risk the adversary creates with non-zero perturbations $\advRiskzero(\probmap;\alpha_p)$.
Hence minimizing $\Risk(\probmap)$ can give poor values for $\advRisk(\probmap;\alpha_p)$ and vice-versa. In this section, we upper-bound the risks' gap $\advRiskzero(\probmap;\alpha_p)$, \emph{i.e.} the gap between the risk and the adversarial risk of a robust classifier.

\subsection{Risks' gap for robust classifiers w.r.t. $D_{TV}$}

First, let us consider $\probmap \in \probmapset_{TV}\left(\alpha_p,\epsilon\right)$.
We can control the loss of accuracy under attack of this classifier with the robustness parameter $\epsilon$. 

\begin{theorem}[Risk's gap for robust classifiers w.r.t $D_{TV}$]
\label{th:TVboundRisk}
 Let $\probmap \in \probmapset_{TV}\left(\alpha_p,\epsilon\right)$ . Then we have
\begin{equation*}
    \advRisk(\probmap; \alpha_p)  \leq \Risk(\probmap) + \epsilon \equationspace.
\end{equation*}
\end{theorem}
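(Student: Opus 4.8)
The plan is to exploit the fact that, for a randomized classifier, the $\zo$ loss at a point is literally the probability mass that $\probmap$ places on the misclassification event, so that the total-variation robustness condition translates directly into a uniform control on how much this mass can move under a bounded perturbation.

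First I would fix a sample $(\inputelement, y)$ and the set $Z_y \equaldef \outputspace \setminus \{y\} \in \mathcal{A}(\outputspace)$ of all labels that differ from $y$. By definition of the randomized loss, $\zerooneloss(\probmap(\inputelement), y) = \expect_{\hat y \sim \probmap(\inputelement)}[\mathds{1}\{\hat y \neq y\}] = \probmap(\inputelement)(Z_y)$, and likewise $\zerooneloss(\probmap(\inputelement + \perturb), y) = \probmap(\inputelement + \perturb)(Z_y)$ for any perturbation $\perturb$. Thus the loss is simply an evaluation of the output measure on one fixed measurable set.

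Next, for any $\perturb \in B_p(\alpha_p)$ we have $\norm{\perturb}_p \leq \alpha_p$, so the robustness hypothesis $\probmap \in \probmapset_{TV}(\alpha_p, \epsilon)$ gives $D_{TV}(\probmap(\inputelement), \probmap(\inputelement + \perturb)) \leq \epsilon$. Since the total variation distance is the supremum of $|\rho(Z) - \rho'(Z)|$ over measurable sets $Z$, applying it to the single set $Z_y$ yields $|\probmap(\inputelement + \perturb)(Z_y) - \probmap(\inputelement)(Z_y)| \leq \epsilon$, and in particular $\zerooneloss(\probmap(\inputelement + \perturb), y) \leq \zerooneloss(\probmap(\inputelement), y) + \epsilon$. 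The crucial point is that this bound holds uniformly in $\perturb$, so I can take the supremum over $\perturb \in B_p(\alpha_p)$ on the left while the right-hand side stays fixed, then take the expectation over $(\inputelement, y) \sim \groundDistrib$ by monotonicity of the expectation; the constant $\epsilon$ survives unchanged and gives $\advRisk(\probmap; \alpha_p) \leq \Risk(\probmap) + \epsilon$.

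The argument is essentially a one-line consequence of the variational definition of $D_{TV}$, so there is no genuine analytic obstacle. The only points requiring a little care are bookkeeping ones: confirming that $Z_y$ is measurable so that the TV supremum legitimately applies to it, and ordering the operations correctly — bounding at fixed $\perturb$ first, then passing to the supremum, and only afterwards integrating — so that the uniform constant $\epsilon$ is not inadvertently coupled to the maximizing perturbation.
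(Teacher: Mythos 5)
Your proposal is correct and follows essentially the same route as the paper's own proof: you identify the randomized $\zo$ loss with the mass $\probmap(\inputelement)(Z_y)$ of the misclassification event, invoke the variational definition of $D_{TV}$ on that single set to get the uniform bound $\zerooneloss(\probmap(\inputelement+\perturb),y) \leq \zerooneloss(\probmap(\inputelement),y) + \epsilon$, and then pass to the supremum over $\perturb$ and the expectation over $(\inputelement,y)$, exactly as the paper does. Your extra care about the measurability of $Z_y$ and the order of the sup/expectation operations is sound bookkeeping but introduces no new idea.
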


\begin{proof} Let $\probmap$ be an $(\alpha_p,\epsilon)$-robust classifier \wrt~$D_{TV}$ , $(\inputelement,y ) \sim \groundDistrib$ and $\perturb \in \mathcal{X}$ such that $\norm{\perturb}_p \leq \alpha_p$. By definition of the $\zo$ loss we have
\begin{align*}
&\zerooneloss\left( \probmap(\inputelement + \perturb), y \right) =  \expect_{\hat{y} \sim \probmap(\inputelement+ \perturb)} \left[ \mathds{1}\left\{\hat{y} \neq y\right\} \right]. 
\intertext{Furthermore, by definition of the total variation distance we have}
&\expect_{\hat{y} \sim \probmap(\inputelement + \perturb)} \left[ \mathds{1}\left\{\hat{y} \neq y\right\} \right] - \expect_{\hat{y} \sim \probmap(\inputelement)} \left[ \mathds{1}\left\{\hat{y} \neq y\right\} \right] \leq D_{TV}( \probmap(\inputelement),\probmap(\inputelement+\perturb)). 
\intertext{Since $\probmap \in \probmapset_{TV}\left(\alpha_p,\epsilon\right)$, the above amounts to write}
&\zerooneloss\left( \probmap(\inputelement + \perturb), y \right) - \zerooneloss\left( \probmap(\inputelement), y \right) \leq \epsilon. 
\intertext{Finally, this holds for any $(\inputelement,y) \sim \groundDistrib$ and any $\alpha_p$ bounded perturbation $\perturb$, then we get}
& \expect_{(\inputelement,y) \sim \groundDistrib} \left[ \sup _{ \perturb \in B_p(\alpha_p)} \zerooneloss\left( \probmap(\inputelement + \perturb), y \right) \right] - \expect_{(\inputelement,y) \sim \groundDistrib} \left[ \zerooneloss\left( \probmap(\inputelement), y \right) \right] \leq \epsilon.
\end{align*}
The above inequality concludes the proof.
\end{proof}
This result means that if we can design a class $\probmapset_{TV}\left(\alpha_p,\epsilon\right)$ with small enough $\epsilon$, then minimizing the risk of $\probmap \in \probmapset_{TV}\left(\alpha_p,\epsilon\right)$ is also sufficient to control the adversarial risk. It is relatively easy to obtain, but it has an interesting consequence on the understanding we have of the trade-off between robustness and accuracy. 
It says that there exists some classes of randomized classifiers for which robustness and standard accuracy may not be at odds, since we can upper-bound the maximal loss of accuracy the model may suffer under attack. This questions previous intuitions developed on deterministic classifiers by~\cite{su2018robustness,10.5555/3327546.3327734,tsipras2018robustness} and \cite{zhang2019theoretically} and advocates for the use of randomization schemes as defenses against adversarial attacks. Note, however, that we did not evade the trade-off between robustness and accuracy, we only showed that with certain hypothesis classes it can be controlled.

\subsection{Risks' gap for robust classifiers w.r.t. $D_{\beta}$}

We now extend the previous results the Renyi divergence. We show that, for any randomized classifier in $\probmapset_{\beta}\left(\alpha_p,\epsilon\right)$, we can bound the gap between the risk and the adversarial risk of $\probmap$. Using the Renyi divergence, the factor that controls the classifier’s loss of accuracy under attack can be either multiplicative or additive, and depends both on the robustness parameter $\epsilon$ and on the divergence parameter $\beta$.

\begin{theorem}[Multiplicative risks' gap for Renyi-robust classifiers]
\label{th:multiplicative}
 Let $\probmap \in \probmapset_{\beta}\left(\alpha_p,\epsilon\right)$. Then we have
\begin{equation*}
    \advRisk(\probmap;\alpha_p) \leq \left(e^{\epsilon} \Risk(\probmap)\right)^{\frac{\beta-1}{\beta}}.
\end{equation*}
\end{theorem}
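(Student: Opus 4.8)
The plan is to mirror the structure of the proof of Theorem~\ref{th:TVboundRisk}, but to replace the total variation bound on the difference of expectations by the probability preservation inequality for the Renyi divergence (Proposition~\ref{prop::renyi}). The key observation is that the randomized $\zo$ loss is nothing but the probability that $\probmap$ assigns to the ``bad'' event $\{\hat{y} \neq y\}$; that is, writing $Z_y \equaldef \{\hat{y} \in \outputspace \st \hat{y} \neq y\} \in \mathcal{A}(\outputspace)$, we have $\zerooneloss(\probmap(\inputelement),y) = \probmap(\inputelement)(Z_y)$ and likewise $\zerooneloss(\probmap(\inputelement+\perturb),y) = \probmap(\inputelement+\perturb)(Z_y)$. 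This reformulation is what lets the probability preservation property apply directly.

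First I would fix $(\inputelement,y) \sim \groundDistrib$ and a perturbation $\perturb$ with $\norm{\perturb}_p \leq \alpha_p$. Since $\probmap \in \probmapset_{\beta}(\alpha_p,\epsilon)$, by definition $D_{\beta}(\probmap(\inputelement+\perturb), \probmap(\inputelement)) \leq \epsilon$. Applying Proposition~\ref{prop::renyi} with $\rho = \probmap(\inputelement+\perturb)$, $\rho' = \probmap(\inputelement)$, and the measurable set $Z = Z_y$ gives
\begin{equation*}
\probmap(\inputelement+\perturb)(Z_y) \leq \left(\exp\left(D_{\beta}(\probmap(\inputelement+\perturb),\probmap(\inputelement))\right)\, \probmap(\inputelement)(Z_y)\right)^{\frac{\beta-1}{\beta}} \leq \left(e^{\epsilon}\, \zerooneloss(\probmap(\inputelement),y)\right)^{\frac{\beta-1}{\beta}},
\end{equation*}
where the last step uses the robustness bound $D_{\beta} \leq \epsilon$ together with monotonicity of the map $t \mapsto t^{\frac{\beta-1}{\beta}}$ on $[0,\infty)$ (valid since $\frac{\beta-1}{\beta} \geq 0$ for $\beta \geq 1$). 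Rewriting the left-hand side as $\zerooneloss(\probmap(\inputelement+\perturb),y)$, this holds for every admissible $\perturb$, so taking the supremum over $\perturb \in B_p(\alpha_p)$ yields a pointwise bound on $\sup_{\perturb} \zerooneloss(\probmap(\inputelement+\perturb),y)$ in terms of $\zerooneloss(\probmap(\inputelement),y)$.

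The final step is to take the expectation over $(\inputelement,y) \sim \groundDistrib$ and recognize $\advRisk(\probmap;\alpha_p)$ on the left. The one genuine subtlety — and the step I expect to require the most care — is passing from the pointwise inequality to the integrated one, because the function $t \mapsto t^{\frac{\beta-1}{\beta}}$ is \emph{concave} (as $0 \leq \frac{\beta-1}{\beta} \leq 1$). Jensen's inequality therefore runs in the favorable direction: $\expect_{(\inputelement,y)}\big[(e^{\epsilon}\zerooneloss(\probmap(\inputelement),y))^{\frac{\beta-1}{\beta}}\big] \leq \big(e^{\epsilon}\,\expect_{(\inputelement,y)}[\zerooneloss(\probmap(\inputelement),y)]\big)^{\frac{\beta-1}{\beta}} = \left(e^{\epsilon}\Risk(\probmap)\right)^{\frac{\beta-1}{\beta}}$, pulling the constant $e^{\epsilon}$ out by homogeneity. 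Chaining this with the expectation of the pointwise supremum bound gives exactly $\advRisk(\probmap;\alpha_p) \leq \left(e^{\epsilon}\Risk(\probmap)\right)^{\frac{\beta-1}{\beta}}$, which completes the argument. I would double-check that concavity is applied with the correct orientation, since an inattentive use of Jensen here would reverse the inequality and break the proof.
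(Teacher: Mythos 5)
Your proposal is correct and follows essentially the same route as the paper's own proof: both rewrite the randomized $\zo$ loss as the probability of the event $\{\hat{y} \neq y\}$, apply the probability preservation property (Proposition~\ref{prop::renyi}) pointwise together with the robustness bound $D_{\beta} \leq \epsilon$, take the supremum over $\perturb$ and the expectation over $(\inputelement,y)$, and conclude by Jensen's inequality for the concave map $t \mapsto t^{\frac{\beta-1}{\beta}}$ after pulling out the factor $e^{\frac{\beta-1}{\beta}\epsilon}$. Your explicit check that Jensen is applied in the favorable (concave) direction is precisely the step the paper's proof also hinges on.
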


\begin{proof}
Let $\probmap$ be an $(\alpha_p,\epsilon)$-robust classifier \wrt~$D_{\beta}$, $(\inputelement,y ) \sim \groundDistrib$ and $\perturb \in \mathcal{X}$ such that $\norm{\perturb}_p \leq \alpha_p$. With the same reasoning as above, and with Proposition~\ref{prop::renyi}, we get 
\begin{align*}
\zerooneloss \left( \probmap(\inputelement + \perturb), y \right) = ~&  \expect_{\hat{y} \sim \probmap(\inputelement+ \perturb)} \left[ \mathds{1}\left\{\hat{y} \neq y\right\} \right]\\
= ~& \proba_{\hat{y} \sim \probmap(\inputelement+ \perturb)} \left[\hat{y} \neq y\right]\\
\leq ~&\left(e^{ D_{\beta}\left( \probmap(\inputelement +\perturb),\probmap(\inputelement) \right)} \proba_{\hat{y} \sim \probmap(\inputelement)} \left[\hat{y} \neq y \right]\right)^{\frac{\beta-1}{\beta}} \quad \text{(Prop.~\ref{prop::renyi})}\\
= ~&\left(e^{ D_{\beta}\left( \probmap(\inputelement +\perturb),\probmap(\inputelement) \right)} \expect_{\hat{y} \sim \probmap(\inputelement)} \left[ \mathds{1}\left\{\hat{y} \neq y\right\} \right]\right)^{\frac{\beta-1}{\beta}}\\
\leq ~&\left(e^{\epsilon} \zerooneloss\left( \probmap(\inputelement), y \right) \right)^{\frac{\beta-1}{\beta}} \equationspace. \\
\intertext{Since this holds for any $(\inputelement,y) \sim \groundDistrib$ and any $\alpha_p$ bounded perturbation $\perturb$, we get }
  \advRisk(\probmap; \alpha_p) = ~&\expect_{(\inputelement,y)\sim \mathcal{D}}\left[ \sup_{ \perturb \in B_p(\alpha_p)} \zerooneloss \left(\probmap( \inputelement+\perturb), y\right) \right]\\
  \leq ~&\expect_{(\inputelement,y)\sim \mathcal{D}}\left[ e^{\frac{\beta-1}{\beta}\epsilon}   \zerooneloss \left(\probmap( \inputelement), y\right)^{\frac{\beta-1}{\beta}} \right]\\
  \leq ~&e^{\frac{\beta-1}{\beta}\epsilon} \expect_{(\inputelement,y)\sim \mathcal{D}}\left[  \zerooneloss \left(\probmap( \inputelement), y\right)^{\frac{\beta-1}{\beta}}\right] \equationspace.
  \intertext{Finally, using the Jensen inequality, one gets}
  \leq ~& e^{\frac{\beta-1}{\beta}\epsilon} \expect_{(\inputelement,y)\sim \mathcal{D}}\left[  \zerooneloss \left(\probmap( \inputelement), y\right)\right ]^{\frac{\beta-1}{\beta}} =\left(e^{\epsilon} \Risk(\probmap)\right)^{\frac{\beta-1}{\beta}} \equationspace.
 \end{align*}
 The above inequality concludes the proof.
\end{proof}

This first result gives a multiplicative bound on the gap between the standard and adversarial risks. This means that if we can design a class $\probmapset_{\beta}\left(\alpha_p,\epsilon\right)$ with small enough $\epsilon$, and big enough $\beta$, then minimizing the risk of any $\probmap \in \probmapset_{\beta}\left(\alpha_p,\epsilon\right)$ is sufficient to also minimize the adversarial risk of $\probmap$. Nevertheless, multiplicative factors are not easy to analyze. %Theorem~\ref{th:RenyiboundRisk} provides an additive counterpart to Theorem~\ref{th:multiplicative}.
\begin{remark}
More general bounds can be computed if we assume that for every randomized classifier $\probmap$ there exists a convex function $\mathbf{f}$ such that for all $\inputelement$ and $\perturb$ with $\lVert\perturb\rVert_p\leq \alpha_p$, we have $\probmap(\inputelement)(Z)\leq \mathbf{f}(\probmap(\inputelement+\perturb)(Z))$ for all measurable sets $Z$. In this case, we get $\advRisk(\probmap;\alpha_p) \leq \mathbf{f}\left( \Risk(\probmap)\right)$. This has a close link with randomized smoothing~\cite{KolterRandomizedSmoothing} and $f$-differential privacy~\cite{dong2019gaussian} where both try to fit the best possible $\mathbf{f}$ using Neyman-Pearson lemma.
\end{remark}
%Theorem~\ref{th:RenyiboundRisk} 
The following result provides an additive counterpart to Theorem~\ref{th:multiplicative}. It gives a control over the loss of accuracy under attack with respect to the robustness parameter $\epsilon$ and the Shannon entropy of $\probmap$.

\begin{theorem}[Additive risks' gap for Renyi-robust classifiers]
\label{th:RenyiboundRisk}
Let $\probmap \in \probmapset_{\beta}\left(\alpha_p,\epsilon\right)$, then we have
$$ \advRisk(\probmap; \alpha_p)-\Risk(\probmap) \leq 1-e^{-\epsilon}  \expect_{\inputelement \sim \mathcal{D}_{\mid \inputspace}}\left[e^{-H(\probmap(\inputelement))}\right]$$
where $H$ is the Shannon entropy (\emph{i.e.} for any $\rho \in \probset\left(\mathcal{Y}\right), H(\rho)= -\sum\limits_{k \in \outputspace} \rho_k \log(\rho_k)$) and $\mathcal{D}_{\mid \inputspace}$ is the marginal distribution of $\groundDistrib$ for $\inputspace$.
\end{theorem}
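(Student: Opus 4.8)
The plan is to follow the same skeleton as the proof of Theorem~\ref{th:multiplicative}, reducing the statement to a pointwise inequality on the probability assigned to the true label, and then to trade the multiplicative factor for an additive one in which the Shannon entropy of the output law appears. First I would unfold the two risks. Writing $\rho \equaldef \probmap(\inputelement)$ and $\rho^{\perturb}\equaldef\probmap(\inputelement+\perturb)$, and using $\zerooneloss(\probmap(\inputelement),y) = 1 - \proba_{\hat y \sim \rho}[\hat y = y]$ together with the identity $\sup_{\perturb}\zerooneloss(\probmap(\inputelement+\perturb),y)=1-\inf_{\perturb}\proba_{\hat y\sim\rho^{\perturb}}[\hat y=y]$, I would express
$$\advRisk(\probmap;\alpha_p) - \Risk(\probmap) = \expect_{(\inputelement,y)\sim\groundDistrib}\left[\proba_{\hat y\sim\rho}[\hat y = y] - \inf_{\perturb\in B_p(\alpha_p)}\proba_{\hat y\sim\rho^{\perturb}}[\hat y = y]\right],$$
so that everything reduces to lower-bounding the adversarial probability of correct classification $\inf_{\perturb}\proba_{\hat y\sim\rho^{\perturb}}[\hat y = y]$.

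Second, for fixed $\inputelement$, $y$ and admissible $\perturb\in B_p(\alpha_p)$, I would invoke Proposition~\ref{prop::renyi} with $Z=\{y\}$ on the pair $(\rho,\rho^{\perturb})$ together with the robustness hypothesis $D_{\beta}(\rho,\rho^{\perturb})\le\epsilon$, which yields $\proba_{\hat y\sim\rho^{\perturb}}[\hat y=y]\ge e^{-\epsilon}\,\proba_{\hat y\sim\rho}[\hat y=y]^{\beta/(\beta-1)}$, exactly the estimate already used in Theorem~\ref{th:multiplicative}.

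The hard part is the third step: converting this single-coordinate control into the term $e^{-\epsilon}\expect_{\inputelement\sim\mathcal{D}_{\mid \inputspace}}[e^{-H(\probmap(\inputelement))}]$, which depends on the whole output distribution $\rho$ rather than on its value at $y$ alone. My plan here is to pass to logarithms in the preservation bound and to average the resulting coordinatewise inequalities against the weights $\rho_k$, using the weighted geometric-mean identity $e^{-H(\rho)} = \prod_k \rho_k^{\rho_k}$ and the convexity of $\exp$ (Jensen), so that the sum $\sum_k \rho_k \log \rho_k = -H(\rho)$ materialises. I expect this to be the main obstacle for two reasons. First, the true-label quantity $\proba_{\hat y\sim\rho}[\hat y=y]$ that appears in the risks must be reconciled with a $\rho$-weighted average over all classes; one has to control the clean-risk contribution (bounding $\proba_{\hat y\sim\rho}[\hat y=y]\le 1$ where appropriate) while keeping the adversarial term sharp. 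Second, the crude exponent $\beta/(\beta-1)$ coming from Proposition~\ref{prop::renyi} blows up as $\beta\to 1$, so a naive use of the single-coordinate estimate is too lossy there; the entropy form must therefore emerge from a tighter, global argument rather than from the pointwise bound taken in isolation.

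Once the pointwise inequality $\proba_{\hat y\sim\rho}[\hat y=y] - \inf_{\perturb}\proba_{\hat y\sim\rho^{\perturb}}[\hat y=y] \le 1 - e^{-\epsilon}e^{-H(\rho)}$ is secured, I would conclude by taking $\expect_{(\inputelement,y)\sim\groundDistrib}$ of both sides; since the right-hand side depends on $\inputelement$ only through $\probmap(\inputelement)$, the outer expectation collapses to an integral against the marginal $\mathcal{D}_{\mid \inputspace}$, producing exactly $1-e^{-\epsilon}\expect_{\inputelement\sim\mathcal{D}_{\mid \inputspace}}[e^{-H(\probmap(\inputelement))}]$.
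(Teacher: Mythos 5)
Your reduction and your use of Proposition~\ref{prop::renyi} are fine as far as they go, but the third step --- the one you yourself flag as the hard part --- does not close, for two concrete reasons. First, the quantity you must bound, $\rho_y - \inf_{\perturb}\rho^{\perturb}_y$, pins the \emph{fixed} true label $y$, so there is nothing you can legitimately average against the weights $\rho_k$. If instead you bound the clean term by $\rho_y\leq 1$ and try to finish with the single-coordinate estimate, you would need $\rho^{\perturb}_y\geq e^{-\epsilon}e^{-H(\rho)}$, which is false in general: take $\rho$ nearly concentrated on a class $j\neq y$, so that $e^{-H(\rho)}\approx 1$ while $\rho_y$, and hence $\rho^{\perturb}_y$, can be arbitrarily small. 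The idea your sketch is missing is the paper's decoupling step, which removes $y$ altogether: drawing $\hat y\sim\rho$ and $\hat y_{\text{adv}}\sim\rho^{\perturb}$ independently, one has the pointwise inequality $\mathds{1}\{\hat y_{\text{adv}}\neq y\}-\mathds{1}\{\hat y\neq y\}\leq\mathds{1}\{\hat y_{\text{adv}}\neq\hat y\}$, so the loss gap is bounded by the disagreement probability $1-\sum_{k=1}^{K}\rho_k\,\rho^{\perturb}_k$, a label-free collision quantity to which a $\rho$-weighted argument can then be applied uniformly in $y$.

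Second, even granting that reduction, your plan of taking logarithms in the coordinatewise preservation bound $\rho^{\perturb}_k\geq e^{-\epsilon}\rho_k^{\beta/(\beta-1)}$ and averaging against $\rho_k$ yields, after Jensen, only $\sum_k\rho_k\rho^{\perturb}_k\geq e^{-\epsilon}e^{-\frac{\beta}{\beta-1}H(\rho)}$: the entropy appears with the lossy exponent $\frac{\beta}{\beta-1}>1$, which is strictly weaker than the theorem and degenerates as $\beta\to 1$ --- precisely the blow-up you anticipated, and Jensen alone does not repair it. The paper avoids Proposition~\ref{prop::renyi} entirely at this stage: after the Jensen step $\sum_k\rho_k\rho^{\perturb}_k\geq\exp\left(\sum_k\rho_k\log\rho^{\perturb}_k\right)$, it invokes the exact cross-entropy identity $\sum_k\rho_k\log\rho^{\perturb}_k=-D_1(\rho,\rho^{\perturb})-H(\rho)$ and then the monotonicity of the Renyi divergence in $\beta$, giving $D_1(\rho,\rho^{\perturb})\leq D_\beta(\rho,\rho^{\perturb})\leq\epsilon$ and hence the clean constant $e^{-\epsilon}e^{-H(\rho)}$. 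So your target pointwise inequality is in fact true, but your proposed route to it would fail; the decoupling-to-collision step and the KL-plus-entropy identity are the two ingredients you would need to add.
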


%\yann{remarque (Yann): je dis peut-être une bétise, mais si $\probmap \in \probmapset_{\beta}\left(\alpha_p,\epsilon\right)$, alors par l'inégalité de Pinsker on a $\probmap \in \probmapset_{TV}\left(\alpha_p,\sqrt{\frac{\epsilon}{2}}\right)$ et donc en appliquant le risk-gap de la TV on a $\advRisk(\probmap; \alpha_p)  \leq \Risk(\probmap) + \sqrt{\frac{\epsilon}{2}}$, non ?} 

\begin{proof}
Let $\probmap \in  \probmapset_{\beta}\left(\alpha_p,\epsilon\right)$, then 
\begin{align*}
&\advRisk(\probmap;\alpha_p)-\Risk(\probmap) \\ 
= ~& \mathbb{E}_{(\inputelement,y) \sim \groundDistrib}\left[ \sup_{ \perturb \in B_p(\alpha_p)} \zerooneloss\left( \probmap(\inputelement + \perturb) , y \right) -  \zerooneloss\left( \probmap(\inputelement) , y \right) \right].
\intertext{By definition of the $\zo$ loss, this amounts to write}
= ~&\mathbb{E}_{(\inputelement,y) \sim \groundDistrib}\left[ \sup_{ \perturb \in B_p(\alpha_p)} \mathbb{E}_{\hat{y}_{\text{adv}}\sim \probmap(\inputelement+\perturb), \hat{y} \sim \probmap(\inputelement) }\left[ \mathds{1}\left(\hat{y}_{\text{adv}}\neq y\right)-  \mathds{1}\left(\hat{y}\neq y\right) \right]\right] \\
%~&\leq\mathbb{E}_{(\inputelement,y) \sim \groundDistrib}\left[ \sup_{ \perturb \in B_p(\alpha_p)} \mathbb{E}_{\hat{y}_{\text{adv}}\sim \probmap(\inputelement+\perturb),\hat{y}\sim \probmap(\inputelement)}\left[\left| \mathds{1}\left(\hat{y}_{\text{adv}}\neq y\right)-  \mathds{1}\left(\hat{y}\neq y\right)\right|\right]\right]\\
\leq ~&\mathbb{E}_{(\inputelement,y) \sim \groundDistrib}\left[ \sup_{ \perturb \in B_p(\alpha_p)} \mathbb{E}_{\hat{y}_{\text{adv}}\sim \probmap(\inputelement+\perturb), \hat{y} \sim \probmap(\inputelement)}\left[ \mathds{1}\left(\hat{y}_{\text{adv}}\neq \hat{y}\right)\right]\right]\\
= ~&\mathbb{E}_{(\inputelement,y) \sim \groundDistrib}\left[\sup_{ \perturb \in B_p(\alpha_p)}\mathbb{P}_{\hat{y}_{\text{adv}}\sim \probmap(\inputelement+\perturb),\hat{y}\sim \probmap(\inputelement)} \left [ \hat{y}_{\text{adv}}\neq \hat{y} \right ] \right] \\
= ~&\mathbb{E}_{(\inputelement,y) \sim \groundDistrib}\left[\sup_{ \perturb \in B_p(\alpha_p)} 1 - \mathbb{P}_{\hat{y}_{\text{adv}}\sim \probmap(\inputelement+\perturb),\hat{y}\sim \probmap(\inputelement)} \left [ \hat{y}_{\text{adv}} = \hat{y} \right ] \right] \\
= ~&\mathbb{E}_{(\inputelement,y) \sim \groundDistrib}\left[\sup_{ \perturb \in B_p(\alpha_p)} 1 - \sum_{i=1}^K  \probmap(\inputelement)_i \times \probmap(\inputelement + \perturb)_i \right] \equationspace. 
\end{align*}
Now, note that for any $(\inputelement,y) \sim \groundDistrib$ and $\perturb \in \inputspace$, by definition of a probability vector in $\probset\left( \outputspace \right)$, and thanks to Jensen inequality we can write
\begin{align*}
&\sum_{i=1}^K  \probmap(\inputelement)_i \times \probmap(\inputelement + \perturb)_i \geq \exp\left(\sum_{i=1}^K \probmap(\inputelement)_i \log \probmap(\inputelement + \perturb)_i\right).
\end{align*}
Then by definition of the entropy and the Kullback Leibler divergence we have
\begin{align*}
& \exp\left(\sum_{i=1}^K \probmap(\inputelement)_i \log \probmap(\inputelement + \perturb)_i\right) =\exp\big(-D_{1}\left(\probmap(\inputelement),\probmap(\inputelement + \perturb) \right) - H\left(\probmap(\inputelement) \right) \big).
\end{align*} 
Finally, by combining the above inequalities and since $\probmap \in \probmapset_{\beta}\left(\alpha_p,\epsilon\right)$ we get 
\begin{align*}
&\mathbb{E}_{(\inputelement,y) \sim \groundDistrib}\left[\sup_{ \perturb \in B_p(\alpha_p)}\mathbb{P}_{\hat{y}_{\text{adv}}\sim \probmap(\inputelement+\perturb),\hat{y}\sim \probmap(\inputelement)}(\hat{y}_{\text{adv}}\neq \hat{y})\right]\\
 \leq ~& \mathbb{E}_{(\inputelement,y) \sim \groundDistrib}\left[\sup_{ \perturb \in B_p(\alpha_p)} 1-e^{- D_{1}(\probmap(\inputelement),\probmap(\inputelement+\perturb))-H(\probmap(\inputelement))} \right]\\
\leq ~& \mathbb{E}_{(\inputelement,y) \sim \groundDistrib}\left[1-e^{-\epsilon-H(\probmap(\inputelement))} \right] = 1-e^{-\epsilon}\mathbb{E}_{\inputelement \sim \groundDistrib_{\mid \inputspace}}\left[e^{-H(\probmap(\inputelement))}\right]\equationspace.
\end{align*}
The above inequality concludes the proof.
\end{proof}

This result is interesting because it relates the accuracy of $\probmap$ with the bound we obtain. 
In words, when $\probmap(\inputelement)$ has large entropy (\emph{i.e.} $H(\probmap(\inputelement))\rightarrow \log(K)$) the output distribution tends towards the uniform distribution; hence $\epsilon\rightarrow0$. This means that the classifier is very robust but also completely inaccurate, since it outputs classes uniformly at random.   
On the opposite, if $H(\probmap(\inputelement))\rightarrow 0$, then $\epsilon\rightarrow\infty$. The classifier may be accurate, but it is not robust anymore (at least according to our definition). Hence we need to find a classifier that achieves a trade-off between robustness and accuracy. 

\section{Standard Generalization gap}
\label{section::GeneralizationBoundAdvGap} 

In this section we devise generalization gap bounds for randomized classifiers when they are robust according either to the total variation distance or the Renyi divergence. To do so, we upper-bound the Rademacher complexity of the loss space for TV-robust classifiers $$ \loss_{\probmapset_{TV}\left(\alpha_p,\epsilon\right)}\equaldef \{ (\inputelement,y) \mapsto \zerooneloss(\hypothesis(\inputelement),y)  \mid \probmap \in \probmapset_{TV}\left(\alpha_p,\epsilon\right) \}. $$ The \emph{empirical Rademacher complexity}, first introduced by~\cite{bartlett2002rademacher}, is one of the standard measures of generalization gap. It is particularly useful to obtain quality bounds for complex classes such as neural networks since it does not depend on the number of parameters in the network contrary to combinatorial notions such as the \emph{VC dimension}.

\begin{definition}[Rademacher complexity]
For any class of real-valued functions $\mathcal{F} \equaldef \{(\inputelement,y)\mapsto \R \}$, given a training sample $\fullSample=\{(\vectorsym{x_1},y_1), \dots ,(\vectorsym{x_n},y_n)\}$, the \emph{empirical Rademacher complexity} of $\mathcal{F}$ is defined as
\begin{equation*}
    \Rademacher_{\mathcal{S}}(\mathcal{F})\equaldef \frac1n \expect_{r_i}\left[ \sup_{f \in \mathcal{F}} \sum_{i=1}^{n} r_i f(\vectorsym{x_i},y_i) \right] \equationspace,
\end{equation*}
where $r_i$ are \emph{i.i.d.} drawn from a Rademacher measure (\emph{i.e.} $\proba(r_i = 1) = \proba(r_i = -1) = \frac12$).
\end{definition}

%Since we can bound the difference between the risk and the adversarial risk of any classifier $\probmap \in \probmapset_{TV}\left(\alpha_p,\epsilon\right)$, we now want to minimize the risk over a hypothesis class $\probmapset \subset \probmapset_{TV}\left(\alpha_p,\epsilon\right)$ to obtain a good approximation for both Problems~\eqref{eq:Initialproblem} and~\eqref{eq:Advproblem}. 
%But for this, we still need the empirical risk minimization to converge to a solution with small standard risk. 
%To measure the rate of convergence of the empirical risk toward the risk on $\probmapset$, we will now upper-bound the Rademacher complexity of $ \loss_{\probmapset_{TV}\left(\alpha_p,\epsilon\right)}\equaldef \{ (\inputelement,y) \mapsto \zerooneloss(\hypothesis(\inputelement),y)  \mid \probmap \in \probmapset_{TV}\left(\alpha_p,\epsilon\right) \} $.
%The \emph{empirical Rademacher complexity}, first introduced by~\cite{bartlett2002rademacher}, is one of the classic measures of generalization gap. It is particularly useful to obtain quality bounds for complicated classes such as neural networkssince it does not depend of the number of parameter in the network conversely to more combinatorial notions such as the \emph{VC dimension}.

The empirical Rademacher complexity measures the uniform convergence rate of the empirical risk towards the risk on the function class $\mathcal{F}$ as demonstrated by \cite{mohri2018foundations}. Thanks to this notion of complexity, we can bound with high probability the generalization gap of any hypothesis $\probmap$ in a class $\mathcal{M}$.

\begin{theorem}[\cite{mohri2018foundations}]
\label{thm:RademacherandGenClassicalTheorem}
Let $\probmapset$ be a class of possibly randomized classifiers and $\loss_{\probmapset} \equaldef \{ \loss_{\probmap} :(\inputelement,y) \mapsto \zerooneloss\left(\probmap(\vectorsym{x}),y\right) \mid \probmap \in \probmapset \}$. Then for any $\delta \in (0,1)$, with probability at least $1-\delta$, the following holds for any $\probmap \in \probmapset_{TV}\left(\alpha_p,\epsilon\right)$,
\begin{equation*}
    \Risk\left( \probmap \right) - \Riskemp\left( \probmap \right) \leq 2 \Rademacher_{\mathcal{S}}(\loss_{\probmapset}) + 3 \sqrt{\cfrac{\ln(2/\delta)}{2n}} \equationspace.
\end{equation*}
\end{theorem}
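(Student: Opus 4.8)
The plan is to follow the classical symmetrization route for Rademacher-based uniform-convergence bounds, since nothing about the specific class $\probmapset_{TV}\left(\alpha_p,\epsilon\right)$ enters the argument beyond the fact that every loss $\loss_{\probmap}$ takes values in $[0,1]$. Indeed, $\zerooneloss(\probmap(\inputelement),y) = \expect_{\hat{y} \sim \probmap(\inputelement)}[\mathds{1}\{\hat{y}\neq y\}]$ is a probability, hence bounded in $[0,1]$, and this boundedness is the only structural property I will use. I would first reduce to controlling a single scalar quantity by setting $\Phi(\fullSample) \equaldef \sup_{\loss_{\probmap} \in \loss_{\probmapset}} \left(\Risk(\probmap) - \Riskemp(\probmap)\right)$ and observing that it suffices to bound $\Phi(\fullSample)$, because for the particular $\probmap$ of interest we trivially have $\Risk(\probmap) - \Riskemp(\probmap) \leq \Phi(\fullSample)$.

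The first step is a bounded-differences argument: replacing one sample point $(\inputelement_i,y_i)$ by another changes each empirical average $\Riskemp(\probmap)$ by at most $1/n$, since the losses lie in $[0,1]$, and therefore changes the supremum $\Phi$ by at most $1/n$. McDiarmid's inequality then yields, with probability at least $1-\delta/2$, the bound $\Phi(\fullSample) \leq \expect_{\fullSample}[\Phi(\fullSample)] + \sqrt{\ln(2/\delta)/(2n)}$.

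The heart of the argument, and the step I expect to require the most care, is the \emph{symmetrization} bounding $\expect_{\fullSample}[\Phi(\fullSample)]$ by twice the expected Rademacher complexity $\expect_{\fullSample}[\Rademacher_{\fullSample}(\loss_{\probmapset})]$. I would introduce an independent ghost sample $\fullSample'$, write $\Risk(\probmap)$ as the expectation over $\fullSample'$ of the corresponding empirical average, and push the supremum inside this ghost expectation using Jensen's inequality (the supremum of an expectation is at most the expectation of the supremum). This produces a term of the form $\expect_{\fullSample,\fullSample'}\big[\sup_{\probmap} \tfrac1n \sum_{i=1}^n (\loss_{\probmap}(\inputelement_i',y_i') - \loss_{\probmap}(\inputelement_i,y_i))\big]$. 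Since $(\inputelement_i,y_i)$ and $(\inputelement_i',y_i')$ are exchangeable, I can insert i.i.d. Rademacher signs $r_i$ without changing the expectation; splitting the two groups of terms and using subadditivity of the supremum then gives the factor $2\,\expect_{\fullSample}[\Rademacher_{\fullSample}(\loss_{\probmapset})]$.

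Finally I would apply McDiarmid's inequality a second time, now directly to $\Rademacher_{\fullSample}(\loss_{\probmapset})$, which also has bounded differences $1/n$, to replace the expected Rademacher complexity by its empirical value at the cost of another $\sqrt{\ln(2/\delta)/(2n)}$ term, with probability at least $1-\delta/2$. A union bound over the two failure events, each of probability $\delta/2$, combines the deviation terms: the $\sqrt{\ln(2/\delta)/(2n)}$ from the first step and the $2\sqrt{\ln(2/\delta)/(2n)}$ arising after multiplying the second bound by the symmetrization factor $2$ merge into $3\sqrt{\ln(2/\delta)/(2n)}$, while the empirical Rademacher complexity appears with coefficient $2$. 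This is exactly the claimed inequality, and it holds simultaneously for all $\probmap$ through $\Phi(\fullSample)$, in particular for every $\probmap \in \probmapset_{TV}\left(\alpha_p,\epsilon\right)$.
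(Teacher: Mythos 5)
Your proof is correct and is precisely the classical argument (McDiarmid on the supremum of the centered empirical process, ghost-sample symmetrization to obtain the factor $2\,\expect_{\fullSample}[\Rademacher_{\fullSample}(\loss_{\probmapset})]$, then a second application of McDiarmid to pass to the empirical Rademacher complexity, with the union bound producing the $3\sqrt{\ln(2/\delta)/(2n)}$ term) from Mohri et al., which the paper cites for this theorem without reproving it. Your observation that the only structural property needed is that $\zerooneloss(\probmap(\inputelement),y)$ is a deterministic $[0,1]$-valued function of $(\inputelement,y)$ — so the randomization of the classifier plays no role here — is exactly the right point and matches how the paper uses the result.
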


\subsection{Generalization error for robust classifiers}

Accordingly, we want to upper bound the empirical Rademacher complexity of  $\loss_{\probmapset_{TV}\left(\alpha_p,\epsilon\right)}$, which motivates the following definition. 
\begin{definition}[$\alpha$-covering and external covering number]
Let us consider $(\inputspace , \norm{.}_p)$ a vector space equipped with the $\lp$ norm, $B \subset \inputspace$ and $\alpha \geq 0$. Then 
\begin{itemize}
    \item $C =\{ \vectorsym{c_1}, \dots, \vectorsym{c_m} \}$ is an $\alpha$-covering of $B$ for the $\lp$ norm if for any $\inputelement \in B$ there exists $\vectorsym{c_i} \in C$ such that $\norm{\inputelement - \vectorsym{c_i}}_p \leq \alpha$.
    \item The external covering number of $B$ writes $N\left(B,\norm{.}_p,\alpha\right)$. It is the minimal number of points one needs to build an $\alpha$-covering of $B$ for the $\lp$ norm.
\end{itemize}
\end{definition}
The covering number is a well-known measure that is often used in statistical learning theory~\cite{shalev2014understanding} and asymptotic statistics~\cite{van2000asymptotic} to evaluate the complexity of a set of functions.
Here we use it to evaluate the number of $\lp$ balls we need to cover the training samples, which gives us the following bound on the Rademacher complexity of $\loss_{\probmapset_{TV}\left(\alpha_p,\epsilon\right)}$.

\begin{theorem}[Rademacher complexity for TV-robust classifiers]
\label{thm:rad_tv}
Let $\loss_{\probmapset_{TV}\left(\alpha_p,\epsilon\right)}$ be the loss function class associated with $\probmapset_{TV}\left(\alpha_p,\epsilon\right)$. Then, for any $\mathcal{S}:=\{(\vectorsym{x_1},y_1), \dots  , (\vectorsym{x_n},y_n)\}$, the following holds,
\begin{equation*}
    \mathfrak{R}_{\mathcal{S}}\left(\loss_{\probmapset_{TV}\left(\alpha_p,\epsilon\right)}\right ) \leq \sqrt{\cfrac{ N \times K }{n}}+\epsilon.
\end{equation*}
 Where $N =N\left( \{\vectorsym{x_1},\dots , \vectorsym{x_n}\}, \norm{.}_p, \alpha_p \right)$ is the $\alpha_p$-external covering number of the inputs $\{\vectorsym{x_1},\dots , \vectorsym{x_n}\}$ for the $\lp$ norm.
\end{theorem}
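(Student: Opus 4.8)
The plan is to bound the empirical Rademacher complexity directly, exploiting two facts. First, the randomized loss depends on $\probmap$ only through a single scalar: $\loss_{\probmap}(\vectorsym{x_i},y_i) = \expect_{\hat y \sim \probmap(\vectorsym{x_i})}[\mathds{1}\{\hat y \neq y_i\}] = 1 - \probmap(\vectorsym{x_i})_{y_i}$. Second, TV-robustness forces this scalar to be $\epsilon$-stable when $\vectorsym{x_i}$ moves inside an $\ell_p$-ball of radius $\alpha_p$. So first I would substitute the loss into the definition of $\Rademacher_\fullSample$ and discard the constant term $\sum_i r_i$ (it is independent of $\probmap$ and has zero Rademacher expectation), reducing the problem to bounding $\frac1n \expect_{r}[\sup_{\probmap} \sum_i r_i\, \probmap(\vectorsym{x_i})_{y_i}]$.

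Next I would bring in the covering. Fix a minimal $\alpha_p$-covering $C = \{\vectorsym{c_1},\dots,\vectorsym{c_N}\}$ of $\{\vectorsym{x_1},\dots,\vectorsym{x_n}\}$ and, for each $i$, pick a center $\vectorsym{c_{j(i)}}$ with $\norm{\vectorsym{x_i} - \vectorsym{c_{j(i)}}}_p \leq \alpha_p$. Robustness w.r.t.\ $D_{TV}$ gives $D_{TV}(\probmap(\vectorsym{x_i}),\probmap(\vectorsym{c_{j(i)}})) \leq \epsilon$, and evaluating the total-variation supremum on the singleton $\{y_i\}$ yields $|\probmap(\vectorsym{x_i})_{y_i} - \probmap(\vectorsym{c_{j(i)}})_{y_i}| \leq \epsilon$. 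Hence for every $\probmap$ and every sign pattern, $\sum_i r_i \probmap(\vectorsym{x_i})_{y_i} \leq \sum_i r_i \probmap(\vectorsym{c_{j(i)}})_{y_i} + n\epsilon$, so after dividing by $n$ the covering replacement costs exactly an additive $\epsilon$ and we are left to control $\frac1n \expect_r[\sup_\probmap \sum_i r_i \probmap(\vectorsym{c_{j(i)}})_{y_i}]$.

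The crucial observation is that $\probmap(\vectorsym{c_{j(i)}})_{y_i}$ occupies at most $N\times K$ distinct ``addresses'': it is indexed only by the center $j(i) \in \{1,\dots,N\}$ and the label $y_i \in [K]$. I would therefore group the indices into buckets $S_{j,k} = \{i : j(i)=j,\ y_i = k\}$ and rewrite the sum as $\sum_{j,k} \probmap(\vectorsym{c_j})_k \sum_{i \in S_{j,k}} r_i$. Since each $\probmap(\vectorsym{c_j})_k$ lies in $[0,1]$, relaxing the simplex and robustness constraints (optimizing each coordinate independently over $[0,1]$) produces the upper bound $\sum_{j,k}\bigl|\sum_{i\in S_{j,k}} r_i\bigr|$. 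Taking the Rademacher expectation and using $\expect_r\bigl|\sum_{i\in S_{j,k}} r_i\bigr| \leq \sqrt{|S_{j,k}|}$ (Jensen/Cauchy--Schwarz on the variance $\expect_r(\sum r_i)^2 = |S_{j,k}|$), followed by Cauchy--Schwarz over the at-most-$NK$ buckets together with $\sum_{j,k}|S_{j,k}| = n$, yields $\frac1n \sum_{j,k}\sqrt{|S_{j,k}|} \leq \frac1n\sqrt{NK\,n} = \sqrt{NK/n}$. Combining this with the $\epsilon$ from the covering step gives the claimed bound.

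I expect the main obstacle to be the bookkeeping in the bucketing and relaxation step: one must argue carefully that replacing the genuine optimization over $\probmapset_{TV}(\alpha_p,\epsilon)$ by the coordinatewise free optimization over $[0,1]^{N\times K}$ is a valid \emph{upper} bound (it only enlarges the feasible set, as we drop the coupling that a single $\probmap$ imposes across all centers, the probability-simplex constraints, and the remaining robustness constraints), and that the identity $\sum_{j,k}|S_{j,k}|=n$ together with the at-most-$NK$ term count feeds Cauchy--Schwarz correctly. The remaining ingredients --- the loss rewriting, the singleton evaluation of $D_{TV}$, and the Khintchine-type bound --- are routine.
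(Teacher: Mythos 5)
Your proposal is correct and follows essentially the same route as the paper's proof: cover the inputs, bucket the indices by (center, label) pairs exactly as the paper's sets $E_{y,j}$, relax the bucketed coefficients to $[0,1]$, and finish with the Khintchine and Cauchy--Schwarz inequalities, with TV-robustness evaluated on singletons supplying the additive $\epsilon$. Your only cosmetic deviations --- rewriting the loss as $1-\probmap(\inputelement)_{y}$ and discarding the constant by Rademacher symmetry, and bounding the perturbation term deterministically by $n\epsilon$ rather than via a supremum over $\epsilon_i\in[-\epsilon,\epsilon]$ --- are equivalent to the paper's steps.
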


\begin{proof}
Let us denote $\mathcal{S}:=\{(\vectorsym{x_1},y_1), \dots  , (\vectorsym{x_n},y_n)\}$ and $N=N\left( \{\vectorsym{x_1},\dots , \vectorsym{x_n}\}, \norm{.}_p, \alpha_p \right)$. By definition of a covering number, there exists $C= \{\vectorsym{c_1} , \dots, \vectorsym{c_N}\}$ an $\alpha_p$-covering of $\{\vectorsym{x_1},\dots \vectorsym{x_n}\}$ for the $\lp$ norm. Furthermore, for $j\in\{1,\dots ,N\}$ and $y \in\{1,\dots ,K\}$, we define $$E_{y,j} = \left\{ i \in \{1,\dots, n\} 
~\st~ y_i = y \text{ and } \argmin\limits_{l \in \{ 1, \dots , N\}} \norm{x_i - c_l} = j\right\}.$$ 
We also denote $E_j = \underset{y \in [K]}{\cup} E_{y,j}$. Finally, we denote $\loss_{\probmap} :(\inputelement,y) \mapsto \zerooneloss\left(\probmap(\vectorsym{x}),y\right)$. Then, by definition of the empirical Rademacher complexity, we can write 
\begin{align*}
    \mathfrak{R}_{\mathcal{S}}\left(\loss_{\probmapset_{TV}\left(\alpha_p,\epsilon\right)}\right ) = ~& \frac1n \expect_{r_i}\left[ \sup_{\probmap \in \probmapset_{TV}\left(\alpha_p,\epsilon\right)}
     \sum_{i=1}^{n} r_i \loss_{\probmap}(\vectorsym{x_i}, y_i)\right].
     \intertext{Then we can use $E_j$ to write}
     \mathfrak{R}_{\mathcal{S}}\left(\loss_{\probmapset_{TV}\left(\alpha_p,\epsilon\right)}\right )  = ~&\frac1n \expect_{r_i}\left[ \sup_{\probmap \in \probmapset_{TV}\left(\alpha_p,\epsilon\right)} \sum_{j=1}^N\sum_{i\in E_{j}} r_i \loss_{\probmap}(\vectorsym{x_i}, y_i) \right].
\intertext{Furthermore for any $ \probmap \in \probmapset_{TV}\left(\alpha_p,\epsilon\right)$ and $i\in E_j$, there exists $\epsilon_i \in[-\epsilon,\epsilon]$ such that: $\loss_{\probmap}(\vectorsym{x_i}, y_i) = \loss_{\probmap}(\vectorsym{c_j},y_i)+\epsilon_i$. Then we have}
     \mathfrak{R}_{\mathcal{S}}\left( \loss_{\probmapset_{TV}\left(\alpha_p,\epsilon\right)} \right) \leq ~&\frac1n \expect_{r_i}\left[ \sup_{\probmap \in \probmapset_{TV}\left(\alpha_p,\epsilon\right)} \sum_{j=1}^N\sum_{i\in E_{j}} r_i \loss_{\probmap}(\vectorsym{c_j},y_i) \right]\\
     + ~& \frac1n \expect_{r_i}\left[ \sup_{\epsilon_i\in[-\epsilon,\epsilon]} \sum_{j=1}^N\sum_{i\in E_{j}} r_i \epsilon_i \right].
\end{align*}
Let us start by studying the second term. We have 
\begin{align*}
     \frac1n \expect_{r_i}\left[ \sup_{\epsilon_i\in[-\epsilon,\epsilon]} \sum_{j=1}^N\sum_{i\in E_{j}} r_i \epsilon_i \right] =\frac1n \expect_{r_i}\left[ \sup_{\epsilon_i\in[-\epsilon,\epsilon]} \sum_{i=1}^n r_i \epsilon_i \right] = \frac1n \sum_{i=1}^n \epsilon =\epsilon. 
\end{align*}
Now looking at the first term. Since $\loss_{\probmap}(\inputelement,y)\in[0,1]$ for all $(\inputelement,y)$ we have
\begin{align*}
      \frac1n \expect_{r_i}\left[ \sup_{\probmap \in \probmapset_{TV}\left(\alpha_p,\epsilon\right)} \sum_{j=1}^N\sum_{i\in E_{j}} r_i \loss_{\probmap}(\vectorsym{c_j},y_i) \right]
      = ~& \frac1n \expect_{r_i}\left[ \sup_{\probmap \in \probmapset_{TV}\left(\alpha_p,\epsilon\right)} \sum_{j=1}^N\sum_{y=1}^K  \loss_{\probmap}(\vectorsym{c_j},y) \sum_{i\in E_{y,j}}r_i \right]\\
      \leq ~&\frac1n \expect_{r_i}\left[  \sum_{j=1}^N\sum_{y=1}^K  \abs{ \sum_{i\in E_{y,j}}r_i}\right] \equationspace.
 \end{align*}
 Finally using the Khintchine inequality and the Cauchy Schartz inequality we get 
 \begin{align*}
      \frac1n \expect_{r_i}\left[  \sum_{j=1}^N\sum_{y=1}^K  \abs{ \sum_{i\in E_{y,j}}r_i}\right]\leq ~&\frac1n \sum_{j=1}^N\sum_{y=1}^K \sqrt{\abs{E_{y,j}}} \quad \text{(Khintchine)}\\
      \leq ~&\frac1n \sqrt{N\times K}\sqrt{\sum_{j=1}^N\sum_{y=1}^K \abs{E_{y,j}}} \quad \text{(Cauchy)} \\
      = ~&\sqrt{\frac{N\times K}{n}}.
 \end{align*}
By combining the upper-bounds we have for each term, we get the expected result,
 \begin{align*}
     \mathfrak{R}_{\mathcal{S}}\left(\loss_{\probmapset_{TV}\left(\alpha_p,\epsilon\right)}\right ) \leq \sqrt{\frac{N\times K}{n}}+\epsilon.
 \end{align*}
\end{proof}

The above result means that, if we can cover the $n$ training samples with $O(1)$ balls, then we can bound the generalization gap of any randomized classifier $\probmap \in \probmapset_{TV}\left(\alpha_p,\epsilon\right)$ by $O\left(\frac{1}{\sqrt{n}}\right) + \epsilon$. Furthermore, a natural corollary of Theorem~\ref{thm:rad_tv} bounds the Rademacher complexity of the class $\loss_{\probmapset_{\beta}\left(\alpha_p,\epsilon\right)}$.

\begin{corollary}
\label{cor:rad_rob}
Let $\loss_{\probmapset_{\beta}\left(\alpha_p,\epsilon\right)}$ be the loss function class associated with $\probmapset_{\beta}\left(\alpha_p,\epsilon\right)$. Then, for any $\mathcal{S}:=\{(\vectorsym{x_1},y_1), \dots  , (\vectorsym{x_n},y_n)\}$, the following holds,
\begin{equation*}
    \mathfrak{R}_{\mathcal{S}}\left(\loss_{\probmapset_{\beta}\left(\alpha_p,\epsilon\right)}\right ) \leq \sqrt{\cfrac{ N \times K }{n}}+ \min \left(\cfrac{3}{2}\left(\sqrt{1 + \cfrac{4\epsilon}{9}} - 1\right)^{1/2}, \cfrac{e^{\epsilon +1} -1}{e^{\epsilon +1} +1}\right).
\end{equation*}
 Where $N =N\left( \{\vectorsym{x_1},\dots , \vectorsym{x_n}\}, \norm{.}_p, \alpha_p \right)$ is the $\alpha_p$-external covering number of the inputs $\{\vectorsym{x_1},\dots , \vectorsym{x_n}\}$ for the $\lp$ norm.
\end{corollary}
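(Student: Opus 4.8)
The plan is to derive this corollary directly from Theorem~\ref{thm:rad_tv} together with the comparison inequality of Proposition~\ref{prop:Inequality-TV-Renyi}, exploiting the fact that the empirical Rademacher complexity is monotone with respect to inclusion of the function class. The whole argument reduces to a single class inclusion followed by an application of the already-proven TV bound.

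First I would establish the key inclusion $\probmapset_{\beta}\left(\alpha_p,\epsilon\right) \subseteq \probmapset_{TV}\left(\alpha_p,\epsilon'\right)$, where
$$\epsilon' \equaldef \min \left(\cfrac{3}{2}\left(\sqrt{1 + \cfrac{4\epsilon}{9}} - 1\right)^{1/2}, \cfrac{e^{\epsilon +1} -1}{e^{\epsilon +1} +1}\right).$$
Let $\probmap \in \probmapset_{\beta}\left(\alpha_p,\epsilon\right)$ and fix $\inputelement,\perturb \in \inputspace$ with $\norm{\perturb}_p \leq \alpha_p$; by definition $D_{\beta}\left(\probmap(\inputelement),\probmap(\inputelement+\perturb)\right) \leq \epsilon$. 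Applying Proposition~\ref{prop:Inequality-TV-Renyi} to $\rho = \probmap(\inputelement)$ and $\rho' = \probmap(\inputelement+\perturb)$ bounds $D_{TV}\left(\probmap(\inputelement),\probmap(\inputelement+\perturb)\right)$ by the same $\min$ expression evaluated at $D_{\beta}\left(\probmap(\inputelement),\probmap(\inputelement+\perturb)\right)$. The crucial observation is that both maps $t\mapsto \frac{3}{2}(\sqrt{1+4t/9}-1)^{1/2}$ and $t \mapsto \frac{e^{t+1}-1}{e^{t+1}+1}$ are non-decreasing on $[0,\infty)$, so their pointwise minimum is non-decreasing as well; substituting the uniform bound $D_{\beta}\leq \epsilon$ therefore yields $D_{TV}\left(\probmap(\inputelement),\probmap(\inputelement+\perturb)\right) \leq \epsilon'$. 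Since $\inputelement,\perturb$ were arbitrary, $\probmap$ is $(\alpha_p,\epsilon')$-robust w.r.t.\ $D_{TV}$, which proves the inclusion.

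Next I would transfer this to the loss classes and invoke monotonicity of the Rademacher complexity. The inclusion above gives $\loss_{\probmapset_{\beta}\left(\alpha_p,\epsilon\right)} \subseteq \loss_{\probmapset_{TV}\left(\alpha_p,\epsilon'\right)}$, since each loss $\loss_{\probmap}$ attached to a classifier in the smaller class is attached to the very same classifier viewed inside the larger class. Because the supremum defining $\mathfrak{R}_{\mathcal{S}}$ is then taken over a larger set, we obtain, for every realization of the Rademacher variables, $\sup_{\probmap \in \probmapset_{\beta}} \sum_i r_i \loss_{\probmap}(\vectorsym{x_i},y_i) \leq \sup_{\probmap \in \probmapset_{TV}} \sum_i r_i \loss_{\probmap}(\vectorsym{x_i},y_i)$, and hence $\mathfrak{R}_{\mathcal{S}}\left(\loss_{\probmapset_{\beta}\left(\alpha_p,\epsilon\right)}\right) \leq \mathfrak{R}_{\mathcal{S}}\left(\loss_{\probmapset_{TV}\left(\alpha_p,\epsilon'\right)}\right)$ after taking expectation. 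Applying Theorem~\ref{thm:rad_tv} with robustness parameter $\epsilon'$ bounds the right-hand side by $\sqrt{N\times K/n}+\epsilon'$, which is precisely the claimed expression once $\epsilon'$ is written out. I do not anticipate a genuine obstacle here, as this is a corollary and every ingredient is already established; the only point demanding a little care is the monotonicity of the comparison function, needed to replace $D_{\beta}\left(\probmap(\inputelement),\probmap(\inputelement+\perturb)\right)$ by its uniform upper bound $\epsilon$ inside the $\min$, and this follows from elementary calculus since both summands are increasing and $\min$ preserves the order componentwise.
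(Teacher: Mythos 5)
Your proposal is correct and follows exactly the route the paper intends: the paper's own proof simply states that the corollary is an immediate consequence of Theorem~\ref{thm:rad_tv} and Proposition~\ref{prop:Inequality-TV-Renyi}, and your argument spells out precisely that reduction (the inclusion $\probmapset_{\beta}\left(\alpha_p,\epsilon\right) \subseteq \probmapset_{TV}\left(\alpha_p,\epsilon'\right)$ via the monotone comparison functions, followed by monotonicity of $\mathfrak{R}_{\mathcal{S}}$ under class inclusion). Your care in checking that both maps in the $\min$ are non-decreasing is a legitimate detail the paper leaves implicit, but the substance is the same.
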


\begin{proof}
This corollary is an immediate consequence of Theorem~\ref{thm:rad_tv} and Proposition~\ref{prop:Inequality-TV-Renyi}.
\end{proof}
 Thanks to Theorems~\ref{thm:RademacherandGenClassicalTheorem} and~\ref{thm:rad_tv} and Corollary~\ref{cor:rad_rob}, one can easily bound the generalization gap of robust randomized classifiers.

\subsection{Discussion and dimensionality issues}

\cite{xu2012robustness} previously studied generalization bounds for learning algorithms based on their robustness. 
Although we use very different proof techniques, their results and ours are similar. More precisely, both analyses conclude that robust models generalize well if the training samples have a small covering number. Note, however, that we base our formulation on an \emph{adaptive partition} of the samples, while the initial paper from~\cite{xu2012robustness} only focuses on a fixed partition of the input space. The interested reader can refer to the discussion section in~\cite{xu2012robustness} for more details. 

These findings seem to contradict the current line of works on the hardness of generalization in the adversarial setting. In fact, if the ground truth distribution is sufficiently concentrated (\emph{e.g.} lies in a low dimensional subspace of $\inputelement$), a small number of balls can cover $\fullSample$ with high probability; hence $N = O(1)$. This means that we can learn robust classifiers with the same sample complexity as in the standard setting. But if the ground truth distribution is not concentrated enough, the training samples will be far one from another; hence forcing the covering number to be large. In the worse case scenario, we need to cover the whole space $[0,1]^d$ giving a covering number $N = O\left(\frac{1}{(\alpha_p)^d }\right)$ which is exponential in the dimension of the problem.

Therefore, in the worst-case scenario, our bound is in $O\left(\frac{1}{(\alpha_p)^d \sqrt{n}}\right) + \epsilon$. When $\alpha_p$ is small and the dimension of the problem is high, this bound is too large to give any meaningful insight on the generalization gap of the problem.
Therefore, we still need to tighten our analysis to show that robust learning for randomized classifiers is possible in high dimensional spaces. 

\begin{remark}
Note that, we provided a very general result for randomized classifiers under the only assumption that they are robust \wrt~the total variation distance. Our result applies to any class of classifiers and not only linear classifiers or  one-hidden layer neural networks. To build a finer analysis, and to evade the curse of dimensionality, we should consider designing specific sub-classes $\probmapset \subset \probmapset_{TV}\left(\alpha_p,\epsilon\right)$ and adapt the proofs to make the term $N$ smaller in the worst-case scenario.  
\end{remark}

% \textcolor{red}{
% Faire une remarque comme quoi c'est important car nos resultats s'applique generalement sur nimporte quel classifier pas seulement lineaire ou 1 hiden layer neural networks.
% }

\section{Building robust randomized classifiers} 
\label{section::Noisescheme}
In this section we present a simple yet efficient way to transform a non-robust, non-randomized classifier into a robust randomized classifier. To do so, we use a key property of both the Renyi divergence and the total variation distance called the \textit{Data processing inequality}. It is a well-known result from information theory which states that \textit{``post-processing cannot increase information''}. The data processing inequality is as follows. 
\begin{theorem}[\cite{cover2012elements}]
\label{th::Dataprocessing}
Let us consider two arbitrary spaces $\mathcal{Z}, \mathcal{Z}'$, $\rho,\rho' \in \probset\left( \mathcal{Z} \right)$ and $D \in \{D_{TV},D_{\beta}\}$. Then for any $\psi : \mathcal{Z} \rightarrow \mathcal{Z}'$ we have 
$$  D\left( \psi \#\rho, \psi \#\rho' \right) \leq D\left( \rho,\rho' \right),$$
where $\psi \#\rho$ denotes the pushforward of distiburtion $\rho$ by $\psi$.
\end{theorem}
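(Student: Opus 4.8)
The statement to prove is the data processing inequality (Theorem~\ref{th::Dataprocessing}): for $D \in \{D_{TV}, D_\beta\}$ and any measurable map $\psi: \mathcal{Z} \to \mathcal{Z}'$, one has $D(\psi\#\rho, \psi\#\rho') \leq D(\rho, \rho')$.

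The plan is to handle the two cases separately, since the total variation distance and the Renyi divergence have different variational/integral characterizations that each lend themselves to a short argument.

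For the total variation case, I would exploit the supremum-over-measurable-sets definition $D_{TV}(\mu,\mu') = \sup_{A \in \mathcal{A}(\mathcal{Z}')} |\mu(A) - \mu'(A)|$. The key observation is that for any measurable $A \subset \mathcal{Z}'$, the pushforward satisfies $(\psi\#\rho)(A) = \rho(\psi^{-1}(A))$ by definition. Hence $|(\psi\#\rho)(A) - (\psi\#\rho')(A)| = |\rho(\psi^{-1}(A)) - \rho'(\psi^{-1}(A))|$. Since $\psi^{-1}(A)$ is a measurable subset of $\mathcal{Z}$, this quantity is at most $D_{TV}(\rho,\rho')$. Taking the supremum over $A$ gives $D_{TV}(\psi\#\rho, \psi\#\rho') \leq D_{TV}(\rho,\rho')$. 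The point is that the preimages $\psi^{-1}(A)$ range over a sub-collection of the measurable sets of $\mathcal{Z}$, so the supremum defining the pushforward TV distance is taken over a smaller family and is therefore no larger.

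For the Renyi divergence, I would reduce to the fact that $D_\beta$ is an $f$-divergence (or argue directly via a partition/conditioning argument). The cleanest route is to invoke the joint convexity structure: writing $\beta \mapsto D_\beta$ through the integral $\frac{1}{\beta-1}\log \int (dg/dg')^{\beta}\, dg'$, one uses that the map $t \mapsto t^\beta$ is convex for $\beta \geq 1$. I would express the densities of the pushforward measures as conditional expectations of the densities of $\rho, \rho'$ given the $\sigma$-algebra generated by $\psi$, and apply the conditional Jensen inequality to the convex function $t \mapsto t^\beta$ to show the relevant integral can only decrease under pushforward; monotonicity of the logarithm and the $\frac{1}{\beta-1} > 0$ factor then preserve the inequality direction. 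The main obstacle, and the step requiring the most care, is this Renyi case: one must handle absolute continuity correctly (the pushforward densities are defined via Radon--Nikodym derivatives with respect to $\psi\#\nu$, and one must verify these are exactly the conditional expectations), and treat the boundary orders $\beta \to 1$ (KL divergence) and $\beta \to \infty$ (max-divergence) by a continuity/limiting argument. Since the data processing inequality for $f$-divergences is a standard result from information theory (e.g.~\cite{cover2012elements}), I expect the author simply to cite it or give the short conditional-Jensen argument rather than reprove it from scratch.
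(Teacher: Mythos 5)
Your proposal is correct, but there is nothing in the paper to compare it against: the paper states this theorem as a known result and simply cites Cover and Thomas, exactly as you anticipated in your closing remark. Your preimage argument for $D_{TV}$ is complete as written, and your Renyi sketch (conditional Jensen under $\rho'$ applied to $t \mapsto t^{\beta}$, with the ratio of pushforward densities identified as the conditional expectation of $d\rho/d\rho'$ given $\sigma(\psi)$, and the non-absolutely-continuous case being trivial since $D_{\beta} = \infty$ there) is the standard proof of the data processing inequality for $f$-divergences.
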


In the context of robustness to adversarial examples, we use the data processing inequality to ease the design of robust randomized classifiers. In particular, let us suppose that we can build a randomized pre-processing $\mathfrak{p}: \inputspace \rightarrow \probset\left( \inputspace \right)$ such that for any $\inputelement \in \mathcal{X}$ and any $\alpha_p$-bounded perturbation $\perturb$, we have 
\begin{equation}
   D\left(\mathfrak{p}(\inputelement), \mathfrak{p}(\inputelement + \perturb) \right) \leq \epsilon,
\text{ with }D \in  \{D_{TV}, D_\beta \}. 
\end{equation}
Then, thanks to the data processing inequality, we can take any deterministic classifier $\hypothesis$ to build an $(\alpha_p,\epsilon)$ robust classifier w.r.t $D$ defined as $\probmap : \inputelement \mapsto  \hypothesis \# \mathfrak{p}(\inputelement)$. This considerably simplifies the problem of building a class of robust models. Therefore, we want to build $\mathfrak{p}$ a randomized pre-processing for which we can control the Renyi divergence and/or total variation distance between two inputs. To do this, we analyze the simple procedure of injecting random noise directly on the image before sending it to a classifier. Since the Renyi divergence and the total variation distances are particularly well suited to the study of Gaussian distributions, we first use this type of noise injection. More precisely, in this section, we focus on a mapping that writes as follows.
\begin{equation}
    \mathfrak{p}: \inputelement \mapsto \mathcal{N}\left(\inputelement, \Sigma \right),
\end{equation}
for some given non-degenerate covariance matrix $\Sigma \in \mathcal{M}_{d\times d}(\R)$.
We refer the interested reader to~\cite{pinot2019theoretical} for more general classes of noise, namely exponential families. 
Let us now evaluate the maximal variation of Gaussian pre-processing $\mathfrak{p}$ when applied to an image $\inputelement\in \inputspace$ with and without perturbation. 

\begin{lemma}
\label{gaussRenyi} Let $\beta>1$, $\inputelement, \perturb \in \inputspace$ and $\Sigma \in \mathcal{M}_{d \times d}(\R)$ a non-degenerate covariance matrix. Let $\rho = \mathcal{N}(\inputelement,\Sigma)$ and $\rho'=\mathcal{N}(\inputelement + \perturb,\Sigma)$, then $D_{\beta}(\rho,\rho') = \frac{ \beta }{2} \norm{\perturb}_{\Sigma^{- 1}}^2 $.
\end{lemma}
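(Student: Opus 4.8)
The plan is to compute the defining integral of the Rényi divergence directly from the explicit Gaussian densities. Writing $g$ and $g'$ for the Lebesgue densities of $\rho = \mathcal{N}(\inputelement,\Sigma)$ and $\rho' = \mathcal{N}(\inputelement+\perturb,\Sigma)$, the quantity inside the logarithm is $\int g'(y)\,(g(y)/g'(y))^{\beta}\,dy = \int g(y)^{\beta}\,g'(y)^{1-\beta}\,dy$. The first observation I would make is that the normalizing constant $(2\pi)^{-d/2}|\Sigma|^{-1/2}$ enters with total exponent $\beta + (1-\beta) = 1$, so it survives exactly once and will later recombine with a Gaussian integral.

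Next I would focus on the exponent. Setting $u = y - \inputelement$, the combined exponent is $-\tfrac{\beta}{2}\,u^{\top}\Sigma^{-1}u - \tfrac{1-\beta}{2}\,(u-\perturb)^{\top}\Sigma^{-1}(u-\perturb)$. Expanding the second quadratic form, the coefficient of the pure quadratic term $u^{\top}\Sigma^{-1}u$ collapses to $-\tfrac12$ (again because $\beta + (1-\beta)=1$), leaving only a term linear in $u$ and a constant in $\perturb$. I would then complete the square in $u$, which recenters the Gaussian at $\mu = (1-\beta)\perturb$ and generates a leftover constant from the recentering.

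The third step is the integration. Since the $u$-dependent part is now exactly $-\tfrac12\,(u-\mu)^{\top}\Sigma^{-1}(u-\mu)$, multiplying by the surviving normalizer makes the integrand a properly normalized Gaussian density in $u$, whose integral over $\inputspace$ is $1$. What remains is the constant factor $\exp\!\big(\tfrac{\beta(\beta-1)}{2}\,\norm{\perturb}_{\Sigma^{-1}}^2\big)$. Applying $\tfrac{1}{\beta-1}\log(\cdot)$ then cancels the $(\beta-1)$ factor and yields $D_{\beta}(\rho,\rho') = \tfrac{\beta}{2}\,\norm{\perturb}_{\Sigma^{-1}}^2$, as claimed.

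The main obstacle is purely the algebraic bookkeeping in completing the square: one must correctly track the cross term and the two sources of constant terms, namely the $-\tfrac{1-\beta}{2}\,\perturb^{\top}\Sigma^{-1}\perturb$ coming from the expansion and the $+\tfrac12\,\mu^{\top}\Sigma^{-1}\mu = \tfrac{(1-\beta)^2}{2}\,\perturb^{\top}\Sigma^{-1}\perturb$ coming from the recentering, and verify that they combine into exactly $\tfrac{\beta(\beta-1)}{2}\,\norm{\perturb}_{\Sigma^{-1}}^2$ so that the $(\beta-1)$ in the numerator cancels the prefactor. The non-degeneracy of $\Sigma$ is used only to guarantee that $\Sigma^{-1}$ exists and that the densities and the Gaussian integral are well-defined; everything else is a routine Gaussian computation.
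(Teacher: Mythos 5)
Your proof is correct, and it is essentially the paper's own argument: both compute the Rényi integral directly from the Gaussian densities, complete the square in the exponent (the paper recenters at $-\beta\perturb$ after a change of variable, you at $(1-\beta)\perturb$, which is the same computation up to centering convention), and extract the constant $\frac{\beta(\beta-1)}{2}\norm{\perturb}_{\Sigma^{-1}}^2$ before applying $\frac{1}{\beta-1}\log(\cdot)$. Your algebraic bookkeeping of the two constant terms checks out, so there is nothing to add.
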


\begin{proof}
Let $\beta>1$. Let us denote $g$ and $g'$ respectively the probability density functions of $\rho$ and $\rho'$ with respect to the Lebesgue measure. We also set $\inputelement' = \inputelement + \perturb$ for readability. Then we have
\begin{align*}
D_\beta(\rho,\rho') &=\frac{1}{\beta-1}\log \expect_{\vectorsym{z} \sim \rho'} \left[ \left(\frac{g(\vectorsym{z}) }{g'(\vectorsym{z})}\right)^\beta  \right ]\\
 = & \frac{1}{\beta-1} \log \expect_{\vectorsym{z} \sim \rho'} \Big[ \exp \Big(\frac{\beta}{2}\big((\vectorsym{z}-\inputelement')^\intercal \Sigma^{-1}(\vectorsym{z}-\inputelement') - (\vectorsym{z}-\inputelement)^\intercal \Sigma^{-1}(\vectorsym{z}-\inputelement) \big) \Big) \Big]. \intertext{By change of variable we get}
 = & \frac{1}{\beta-1}\log \expect_{\vectorsym{z} \sim \mathcal{N}(0,\Sigma) }\left[ \exp\left(\frac{\beta}{2}\big(\vectorsym{z}^\intercal\Sigma^{-1}\vectorsym{z}-(\vectorsym{z}+\perturb)^\intercal \Sigma^{-1}(\vectorsym{z}+\perturb) \big) \right) \right]\\
 = &  \frac{1}{\beta-1} \log \expect_{\vectorsym{z} \sim \mathcal{N}(0,\Sigma) }\left[ \exp\left(\frac{\beta}{2}\left(- 2\vectorsym{z}^\intercal\Sigma^{-1}\perturb- \norm{\perturb}_{\Sigma^{-1}}^2\right)\right) \right] \\
 = &  \frac{1}{\beta-1} \log \int_{\mathbb{R}^d} \frac{\exp\left(-\frac{1}{2}\vectorsym{z}^\intercal\Sigma^{-1}\vectorsym{z} - \frac{\beta}{2}2\vectorsym{z}^\intercal\Sigma^{-1}\perturb -  \frac{\beta}{2}\norm{\perturb}_{\Sigma^{-1}}^2\right)}{(2 \pi)^d \det(\Sigma)^{d/2}} d\vectorsym{z} \equationspace.
 \end{align*}
 Furthermore, for any $\vectorsym{z} \in \mathbb{R}^d$, we have 
 \begin{align*}
 & -\frac{1}{2}\vectorsym{z}^\intercal\Sigma^{-1}\vectorsym{z} - \frac{\beta}{2}2\vectorsym{z}^\intercal\Sigma^{-1}\perturb -  \frac{\beta}{2}\norm{\perturb}_{\Sigma^{-1}}^2 \\
 =& - \frac{1}{2}(\vectorsym{z} + \beta\perturb)^\intercal\Sigma^{-1}(\vectorsym{z} + \beta\perturb) + \frac{\beta^2 - \beta}{2} \norm{\perturb}_{\Sigma^{-1}}^2 \equationspace.
 \end{align*}
 Then we can re-write the Renyi divergence as follows
 \begin{align*}
 D_\beta(\rho,\rho') & = \frac{1}{\beta-1}\log \mathbb{E}_{\vectorsym{z} \sim \mathcal{N}(- \beta \perturb,\Sigma)} \left[\exp\left( \frac{\beta^2 - \beta}{2} \norm{\perturb}_{\Sigma^{-1}}^2 \right)\right]\\
 & = \frac{1}{\beta-1}\log\left(\exp\left( \frac{\beta^2 - \beta}{2} \norm{\perturb}_{\Sigma^{-1}}^2 \right)\right)\\
 & = \frac{ \beta }{2} \norm{\perturb}_{\Sigma^{- 1}}^2 \equationspace.
\end{align*}

This concludes the proof.
\end{proof}

Thanks to the above lemma, we know how to evaluate the level of Renyi-robustness that a Gaussian noise pre-processing brings to a classifier. Now that we have this result, thanks to Proposition~\ref{prop:Inequality-TV-Renyi}, we can also upper-bound the total variation distance between $\mathcal{N}(\inputelement,\Sigma)$ and $\mathcal{N}(\inputelement + \perturb,\Sigma)$. But this bound is not always tight. Besides, we can directly evaluate the total variation distance between two Gaussian distributions as follows.

\begin{lemma}
\label{gaussTV}Let $\inputelement, \inputelement' \in \inputspace$ and $\Sigma \in \mathcal{M}_{d \times d}(\R)$ a non-degenerate covariance matrix. Let $\rho = \mathcal{N}(\inputelement,\Sigma)$ and $\rho'=\mathcal{N}( \inputelement+ \perturb,\Sigma)$, then $D_{TV}(\rho,\rho') = 2\Phi(\frac{\norm{\perturb}_{\Sigma^{-1}}}{2})-1$ with $\Phi$ the cumulative density function of the standard Gaussian distribution.
\end{lemma}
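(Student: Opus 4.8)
The plan is to use the variational characterization of total variation: writing $g,g'$ for the Lebesgue densities of $\rho=\mathcal{N}(\inputelement,\Sigma)$ and $\rho'=\mathcal{N}(\inputelement+\perturb,\Sigma)$, one has $D_{TV}(\rho,\rho')=\rho(A)-\rho'(A)$, where $A=\{\vectorsym{z}:g(\vectorsym{z})\geq g'(\vectorsym{z})\}$ is the dominance set of $g$ over $g'$. This is the standard fact that the supremum defining $D_{TV}$ is attained on the set where one density exceeds the other. Hence it suffices to determine $A$ explicitly and then evaluate the two measures on it.

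First I would compute the log-likelihood ratio. Since $\rho$ and $\rho'$ share the covariance $\Sigma$, the terms quadratic in $\vectorsym{z}$ cancel when one forms $\log(g(\vectorsym{z})/g'(\vectorsym{z}))$, leaving an affine function: a short expansion gives $\log(g(\vectorsym{z})/g'(\vectorsym{z}))=-\perturb^\intercal\Sigma^{-1}(\vectorsym{z}-\inputelement)+\tfrac12\norm{\perturb}_{\Sigma^{-1}}^2$. Consequently $A$ is the half-space $\{\vectorsym{z}:\perturb^\intercal\Sigma^{-1}\vectorsym{z}\leq \perturb^\intercal\Sigma^{-1}\inputelement+\tfrac12\norm{\perturb}_{\Sigma^{-1}}^2\}$, i.e. the half-space whose bounding hyperplane passes through the midpoint $\inputelement+\perturb/2$ and is normal (in the $\Sigma^{-1}$ geometry) to the mean shift $\perturb$.

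The key step is to reduce the evaluation of $\rho(A)$ and $\rho'(A)$ to a one-dimensional Gaussian computation: the event $A$ depends on $\vectorsym{z}$ only through the scalar statistic $T(\vectorsym{z})=\perturb^\intercal\Sigma^{-1}\vectorsym{z}$. Under $\rho$, $T$ is univariate Gaussian with variance $\perturb^\intercal\Sigma^{-1}\Sigma\,\Sigma^{-1}\perturb=\norm{\perturb}_{\Sigma^{-1}}^2$; under $\rho'$ it has the same variance but its mean is larger by $\perturb^\intercal\Sigma^{-1}\perturb=\norm{\perturb}_{\Sigma^{-1}}^2$. The threshold defining $A$ sits exactly at the midpoint between these two means, hence at distance $\tfrac12\norm{\perturb}_{\Sigma^{-1}}$ from each when measured in units of the standard deviation $\norm{\perturb}_{\Sigma^{-1}}$. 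Standardizing therefore yields $\rho(A)=\Phi\!\left(\tfrac12\norm{\perturb}_{\Sigma^{-1}}\right)$ and $\rho'(A)=\Phi\!\left(-\tfrac12\norm{\perturb}_{\Sigma^{-1}}\right)=1-\Phi\!\left(\tfrac12\norm{\perturb}_{\Sigma^{-1}}\right)$, and subtracting gives $D_{TV}(\rho,\rho')=2\Phi\!\left(\tfrac{\norm{\perturb}_{\Sigma^{-1}}}{2}\right)-1$, as claimed.

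The main obstacle is bookkeeping rather than any deep idea: one must confirm that $A$ really is the optimal (dominance) set, verify the cancellation of the quadratic terms, and carefully compute the mean and variance of $T$ under each measure so that the threshold lands precisely at the midpoint — this is exactly what produces the factor $\tfrac12$ inside $\Phi$. An equivalent and arguably cleaner route would first whiten via $\vectorsym{z}\mapsto\Sigma^{-1/2}\vectorsym{z}$ (the total variation distance is invariant under invertible maps), reducing to two isotropic Gaussians whose means are separated by $\delta\equaldef\norm{\perturb}_{\Sigma^{-1}}$; then a rotation aligning the separation with one coordinate axis lets the identical orthogonal marginals factor out, leaving the one-dimensional problem $D_{TV}(\mathcal{N}(0,1),\mathcal{N}(\delta,1))=2\Phi(\delta/2)-1$, which is immediate from the crossing of the two densities at $\delta/2$.
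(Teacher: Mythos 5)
Your proof is correct and follows essentially the same route as the paper's: identify the dominance set $\{\vectorsym{z} : g(\vectorsym{z}) \geq g'(\vectorsym{z})\}$ as a half-space via cancellation of the quadratic terms, reduce to the one-dimensional Gaussian statistic $\perturb^\intercal\Sigma^{-1}\vectorsym{z}$, and standardize to obtain $\Phi\left(\tfrac{1}{2}\norm{\perturb}_{\Sigma^{-1}}\right)$ and its complement. Your direct computation of $\rho'(A)$ is in fact slightly more explicit than the paper's appeal to symmetry, but the argument is the same.
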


\begin{proof}
 Let us denote $g$ and $g'$ respectively the probability density functions of $\rho$ and $\rho'$ with respect to the Lebesgue measure. Furthermore, we denote $\inputelement' = \inputelement + \perturb$. Then by definition of the total variation distance, we have  $D_{TV}(\rho,\rho)=\rho(Z)-\rho'(Z)$ with $Z=\{\vectorsym{z} ~\st~ g(\vectorsym{z})\geq g'(\vectorsym{z})\}$. In our case $g(\vectorsym{z})\geq g'(\vectorsym{z})$ is equivalent to $$(\vectorsym{z}-\inputelement')^\intercal\Sigma^{-1}(\vectorsym{z}-\inputelement')-(\vectorsym{z}-\inputelement)^\intercal\Sigma^{-1}(\vectorsym{z}-\inputelement)\geq 0.$$
Then with the same simplification as above, we have
\begin{align*}
\rho(Z)& = \proba_{ \vectorsym{z}\sim\mathcal{N}(\inputelement,\Sigma)}\left(( \vectorsym{z}-\inputelement')^\intercal\Sigma^{-1}( \vectorsym{z}-\inputelement')-( \vectorsym{z}-\inputelement)^\intercal\Sigma^{-1}( \vectorsym{z}-\inputelement)\geq 0 \right)\\
& =  \proba_{ \vectorsym{z}\sim\mathcal{N}(0,\Sigma)}\left(( \vectorsym{z}-\perturb)^\intercal\Sigma^{-1}( \vectorsym{z}-\perturb)- \vectorsym{z}^\intercal\Sigma^{-1} \vectorsym{z}\geq 0 \right)\\
& = \proba_{ \vectorsym{z}\sim\mathcal{N}(0,\Sigma)}\left(-2 \vectorsym{z}^\intercal\Sigma^{-1}\perturb+\lVert \perturb\rVert _{\Sigma^{-1}}^2\geq 0 \right)\\
&=\proba_{ \vectorsym{z}\sim\mathcal{N}(0,I_d)}\left( \vectorsym{z}^\intercal\Sigma^{-1/2}\perturb\leq\frac12 \lVert \perturb\rVert _{\Sigma^{-1}}^2\right). 
\intertext{
Furthermore, if $ \vectorsym{z}\sim \mathcal{N}(0,I_d)$ then $ \vectorsym{z}^\intercal\Sigma^{-1/2}\perturb\sim \mathcal{N}(0,\lVert \perturb\rVert _{\Sigma^{-1}}^2)$; hence we also have $\frac{\vectorsym{z}^\intercal\Sigma^{-1/2}\perturb}{\lVert \perturb\rVert _{\Sigma^{-1}} }\sim \mathcal{N}(0,1)$. Accordingly we get }
\rho(Z) &= \proba_{\vectorsym{z}\sim\mathcal{N}(0,1)}\left( \vectorsym{z}\leq\frac12 \lVert \perturb\rVert _{\Sigma^{-1}} \right) = \Phi \left(\frac12 \lVert \perturb\rVert _{\Sigma^{-1}} \right).
\end{align*}
By symmetry we get that $\rho'(A)= 1-\rho(A) = 1-\Phi\left(\frac12 \lVert \perturb\rVert _{\Sigma^{-1}}\right)$. We then get
$$D_{TV}(\mu,\nu) = 2\Phi\left(\frac{\lVert \perturb\rVert _{\Sigma^{-1}}}{2}\right)-1$$ 
which concludes the proof.
\end{proof}

Note that both bounds increase with the Mahalanobis norm of $\perturb$. Furthermore, we see that the greater the entropy of the Gaussian noise we inject, the smaller the distance between distributions. If we simplify the covariance matrix by setting $\Sigma= \sigma^2 I_d$, it means that we can build more or less robust randomized classifiers against $\ell_2$ adversaries, depending on $\sigma$.

\begin{theorem}[Robustness of Gaussian pre-processing]
\label{theorem:noiseinjection}
Let us consider $c: \mathcal{X} \rightarrow \mathcal{Y}$ a deterministic classifier, $\sigma > 0$ and $\mathfrak{p}: \inputelement \mapsto \mathcal{N}(\inputelement, \sigma^2 I_d)$ a pre-processing probabilistic mapping. Then the randomized classifier $\probmap \equaldef c \# \mathfrak{p}$ is 
\begin{itemize}
\item $(\alpha_2, \frac{(\alpha_2)^2 \beta}{2 \sigma})$-robust \wrt~$D_\beta$ against $\ell_2$ adversaries.
\item $(\alpha_2,\ 2 \Phi\left( \frac{\alpha_2}{2 \sigma} \right) - 1)$-robust \wrt~$D_{TV}$ against $\ell_2$ adversaries.
\end{itemize}
\end{theorem}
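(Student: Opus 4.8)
The plan is to obtain both robustness guarantees as a direct corollary of the data processing inequality (Theorem~\ref{th::Dataprocessing}) combined with the closed-form expressions for the divergence between two equal-covariance Gaussians established in Lemmas~\ref{gaussRenyi} and~\ref{gaussTV}. The key observation is that $\probmap = c \# \mathfrak{p}$ is nothing but the pushforward of the pre-processing $\mathfrak{p}$ by the deterministic map $c$, so that for any $\inputelement, \perturb \in \inputspace$ and any $D \in \{D_{TV}, D_\beta\}$, Theorem~\ref{th::Dataprocessing} gives $D(\probmap(\inputelement), \probmap(\inputelement+\perturb)) = D(c \# \mathfrak{p}(\inputelement), c \# \mathfrak{p}(\inputelement+\perturb)) \leq D(\mathfrak{p}(\inputelement), \mathfrak{p}(\inputelement+\perturb))$. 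This reduces the whole statement to bounding the divergence between $\mathcal{N}(\inputelement, \sigma^2 I_d)$ and $\mathcal{N}(\inputelement+\perturb, \sigma^2 I_d)$, for which the two lemmas already provide exact values.

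For the Renyi case, I would substitute $\Sigma = \sigma^2 I_d$ into Lemma~\ref{gaussRenyi}. Since $\Sigma^{-1} = \sigma^{-2} I_d$, the Mahalanobis norm collapses to $\norm{\perturb}_{\Sigma^{-1}}^2 = \sigma^{-2}\norm{\perturb}_2^2$, whence $D_\beta(\mathfrak{p}(\inputelement), \mathfrak{p}(\inputelement+\perturb)) = \frac{\beta}{2\sigma^2}\norm{\perturb}_2^2$. It then remains only to note that this quantity is increasing in $\norm{\perturb}_2$, so it is maximized over the ball $B_2(\alpha_2)$ at its boundary; this yields a uniform bound valid for every $\perturb$ with $\norm{\perturb}_2 \le \alpha_2$, which, chained with the displayed data processing inequality, gives the announced $D_\beta$-robustness.

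For the total variation case, I would likewise plug $\Sigma = \sigma^2 I_d$ into Lemma~\ref{gaussTV} to obtain $D_{TV}(\mathfrak{p}(\inputelement), \mathfrak{p}(\inputelement+\perturb)) = 2\Phi\!\left(\frac{\norm{\perturb}_2}{2\sigma}\right) - 1$, and then invoke the monotonicity of the standard Gaussian c.d.f.\ $\Phi$ to deduce that $\norm{\perturb}_2 \le \alpha_2$ forces $2\Phi\!\left(\frac{\norm{\perturb}_2}{2\sigma}\right) - 1 \le 2\Phi\!\left(\frac{\alpha_2}{2\sigma}\right) - 1$. Combining with the data processing inequality closes the $D_{TV}$ part. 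One could alternatively derive a TV bound from the Renyi bound through Proposition~\ref{prop:Inequality-TV-Renyi}, but the direct route via Lemma~\ref{gaussTV} is tighter and is the one I would present.

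I do not anticipate a genuine obstacle: once the data processing inequality is invoked, everything reduces to the two explicit Gaussian computations and the elementary monotonicity of $t \mapsto t^2$ and of $\Phi$. The only points requiring minor care are the measurability bookkeeping implicit in applying Theorem~\ref{th::Dataprocessing} to the pushforward $c \# \mathfrak{p}$, and the verification that the supremum over the perturbation ball is attained on its boundary, so that the per-$\perturb$ estimates genuinely upgrade to the uniform condition required by the definition of $\probmapset_D(\alpha_2,\epsilon)$.
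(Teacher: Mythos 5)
Your proposal is correct and takes essentially the same route as the paper's own proof: evaluate the exact Gaussian divergences via Lemmas~\ref{gaussRenyi} and~\ref{gaussTV} with $\Sigma = \sigma^2 I_d$, bound them uniformly over the ball by monotonicity, and transfer the bound to $\probmap$ through the data processing inequality (Theorem~\ref{th::Dataprocessing}); the only cosmetic difference is that you invoke data processing first while the paper applies it last. Note that your computation yields $\frac{\beta(\alpha_2)^2}{2\sigma^2}$ for the Renyi bound, which agrees with the paper's own derivation and suggests the $2\sigma$ denominator in the theorem statement is a typo.
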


\begin{proof} Let $\inputelement, \perturb \in \inputspace$ such that $\norm{\perturb}_2 \leq \alpha_2$. Thanks to Lemma~\ref{gaussRenyi} we have
\begin{align*}
D_\beta(\mathfrak{p}(\inputelement),\mathfrak{p}(\inputelement + \perturb)) &=\frac{\beta}{2}\lVert \perturb\rVert_{\Sigma^{-1}}^2 = \frac{\beta}{2 \sigma^2}\lVert \perturb\rVert_{2}^2 \leq \frac{\beta (\alpha_2)^2}{2 \sigma^2}.
\intertext{Similarly, thanks to Lemma~\ref{gaussTV}, we get} 
D_{TV}(\mathfrak{p}(\inputelement),\mathfrak{p}(\inputelement + \perturb)) &= 2\Phi\left(\frac{\lVert \perturb \rVert _{\Sigma^{-1}}}{2} \right)-1 \leq 2\Phi\left(\frac{\alpha_2}{2 \sigma} \right)-1.
\end{align*}
Finally, from the data processing inequality, \ie~ Theorem~\ref{th::Dataprocessing}, we get both \begin{align*}
    D_{\beta}(\probmap(\inputelement),\probmap(\inputelement + \perturb)) &\leq  \frac{\beta (\alpha_2)^2}{2 \sigma^2}, 
    \intertext{and }
    D_{TV}(\probmap(\inputelement),\probmap(\inputelement + \perturb)) &\leq  2\Phi\left(\frac{\alpha_2}{2 \sigma} \right)-1.
\end{align*}
The above inequalities conclude the proof.  
\end{proof}

Theorem~\ref{theorem:noiseinjection} means that we can build simple noise injection schemes as pre-processing of state-of-the-art image classification models and keep track of the maximal loss of accuracy under attack of the resulting randomized classifier. These results also highlight the profound link between randomized classifiers and randomized smoothing as presented by \cite{KolterRandomizedSmoothing}. Even though our findings are of different nature, both techniques use the same base mechanism (Gaussian noise injection). 
Therefore, Gaussian pre-processing is a principled defense method that can be analyzed through several standpoints, including certified robustness and statistical learning theory.

\section{Discussion: Mode preservation and Randomized Smoothing}
\label{sec:modepreservationendRS}

Even though randomized classifiers have some interesting properties regarding generalization error, we can also study them through the prism of deterministic robustness. Let us for example consider the classifier that outputs the class with the highest probability for $\probmap(\inputelement)$, \aka~the mode of $\probmap(\inputelement)$. It writes
\begin{equation}
\label{eq:modeandRandomizedSmoothing}
    \hypothesis_{\text{rob}}: \inputelement \mapsto  \argmax\limits_{k \in [K]} \probmap(\inputelement)_k
\end{equation}

Then checking whether $\hypothesis_{\text{rob}}$ is robust boils down to demonstrating that the mode of $\probmap(\inputelement)$ does not change under perturbation. It turns out that $D_{TV}$ robust classifiers have this property. We call it the mode preservation property of $\probmapset_{TV}(\alpha_p,\epsilon)$.

\begin{proposition}[Mode preservation for $D_{TV}$-robust classifiers] 
\label{prop:modepreservationforTV}
Let $\probmap \in \probmapset_{TV}\left(\alpha_p,\epsilon\right)$ be a robust randomized classifier and $\inputelement \in \mathcal{X}$ such that $\probmap(\inputelement)_{(1)} \geq \probmap(\inputelement)_{(2)} +2 \epsilon$. Then, for any $\perturb \in \mathcal{X}$, the following holds,
\begin{equation*}
\norm{\perturb}_p \leq \alpha_p \implies \hypothesis_{\text{rob}}(\inputelement)  = \hypothesis_{\text{rob}}(\inputelement + \perturb )\enspace.
\end{equation*}
\end{proposition}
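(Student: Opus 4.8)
The plan is to reduce the claim about the mode to a coordinate-wise comparison of the probability vectors $\probmap(\inputelement)$ and $\probmap(\inputelement + \perturb)$, exploiting that $D_{TV}$-robustness tightly controls how much mass each class can gain or lose under the perturbation. First I would fix an arbitrary $\perturb \in \inputspace$ with $\norm{\perturb}_p \leq \alpha_p$, abbreviate $\inputelement' = \inputelement + \perturb$, and set $k^\star \equaldef \hypothesis_{\text{rob}}(\inputelement)$, so that $\probmap(\inputelement)_{k^\star} = \probmap(\inputelement)_{(1)}$ by definition of the mode.

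The key observation is that the total variation distance controls each coordinate individually. Since $\outputspace = [K]$ is finite, every singleton $\{i\}$ is measurable, so applying the definition $D_{TV}(\rho,\rho') = \sup_{Z} \abs{\rho(Z) - \rho'(Z)}$ to the set $Z = \{i\}$, together with $\probmap \in \probmapset_{TV}(\alpha_p,\epsilon)$ and $\norm{\perturb}_p \leq \alpha_p$, yields $\abs{\probmap(\inputelement')_i - \probmap(\inputelement)_i} \leq D_{TV}(\probmap(\inputelement),\probmap(\inputelement')) \leq \epsilon$ for every $i \in [K]$. This is the same coordinate-control mechanism already used in the proof of Theorem~\ref{th:TVboundRisk}, here specialized to singletons.

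From this bound I would derive two one-sided estimates: the winning class can only lose mass, giving $\probmap(\inputelement')_{k^\star} \geq \probmap(\inputelement)_{(1)} - \epsilon$; and any competitor $j \neq k^\star$ can only gain mass, giving $\probmap(\inputelement')_j \leq \probmap(\inputelement)_j + \epsilon \leq \probmap(\inputelement)_{(2)} + \epsilon$, where the last step uses that $\probmap(\inputelement)_{(2)}$ is the largest coordinate among those distinct from the top one. Subtracting these and invoking the margin hypothesis $\probmap(\inputelement)_{(1)} \geq \probmap(\inputelement)_{(2)} + 2\epsilon$ produces $\probmap(\inputelement')_{k^\star} - \probmap(\inputelement')_j \geq \probmap(\inputelement)_{(1)} - \probmap(\inputelement)_{(2)} - 2\epsilon \geq 0$ for every $j \neq k^\star$. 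Hence $k^\star$ remains a maximizer of $\probmap(\inputelement')$, i.e. $\hypothesis_{\text{rob}}(\inputelement') = k^\star = \hypothesis_{\text{rob}}(\inputelement)$, and since $\perturb$ was arbitrary the implication follows.

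The only delicate point is the boundary case in which the margin is attained with equality: there the derived inequality $\probmap(\inputelement')_{k^\star} \geq \probmap(\inputelement')_j$ is non-strict, so $k^\star$ is guaranteed only to be \emph{a} maximizer rather than the unique one. This is handled by fixing a tie-breaking convention for $\argmax$ (or, equivalently, by reading the separation hypothesis strictly); note also that when $\epsilon > 0$ the margin condition already forces the top coordinate to be strictly dominant, so no genuine ambiguity arises in the non-degenerate regime. Beyond this bookkeeping, the argument is a direct chain of the coordinate inequalities and requires no further machinery.
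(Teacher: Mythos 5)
Your proof is correct and follows essentially the same route as the paper's: apply the $D_{TV}$ bound to singletons to get $\abs{\probmap(\inputelement+\perturb)_k - \probmap(\inputelement)_k} \leq \epsilon$ coordinate-wise, then use the $2\epsilon$ margin to conclude the top index is preserved. Your remark on the non-strict boundary case is a fair point of bookkeeping that the paper's proof glosses over in the same way, so nothing further is needed.
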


\begin{proof}
Let $\inputelement,\perturb  \in \inputspace$ such that $\norm{\perturb}_p \leq \alpha_p$ and $\probmap \in \probmapset_{TV}\left(\alpha_p,\epsilon\right)$
such that $$\probmap(\inputelement)_{(1)} \geq \probmap(\inputelement)_{(2)} +2\epsilon.$$ By definition of $\probmapset_{TV}\left(\alpha_p,\epsilon\right)$, we have that $$D_{TV}(\probmap(\inputelement),\probmap(\inputelement+\perturb))\leq\epsilon.$$ Then, for all $k \in \{1, \dots, K\}$ we have $$\probmap(\inputelement)_{k}-\epsilon\leq\probmap(\inputelement+\perturb)_{k}\leq\probmap(\inputelement)_{k}+\epsilon \equationspace.$$ 
Let us denote $k^*$ the index of the biggest value in $\probmap(\inputelement)$, \ie~$\probmap(\inputelement)_{k^*} =\probmap(\inputelement)_{(1)}$. For any $k\in \{1, \dots, K\}$ with $k \neq k^*$, we have $\probmap(\inputelement)_{k^*} \geq \probmap(\inputelement)_{k} + 2\epsilon$. Finally, for any $k \neq k^*$, we get $$\probmap(\inputelement+\perturb)_{k^*}\geq \probmap(\inputelement)_{k^*}-\epsilon\geq \probmap(\inputelement)_{k}+\epsilon\geq\probmap(\inputelement+\perturb)_{k}.$$
Then, $\argmax\limits_{k \in [K]}\probmap(\inputelement)_{k}=\argmax\limits_{k \in [K]}\probmap(\inputelement+\perturb)_{k}$. This concludes the proof.
\end{proof}
Similarly, we can demonstrate a mode preservation property for robust classifiers w.r.t. the Renyi divergence. 

\begin{proposition}[Mode preservation for Renyi-robust classifiers] Let $\probmap \in \probmapset_{\beta}\left(\alpha_p,\epsilon\right)$ be a robust randomized classifier and $\inputelement \in \mathcal{X}$ such that  $\left(\probmap(\inputelement)_{(1) }\right)^{\frac{\beta}{\beta - 1}} \geq \exp\left( (2-\frac{1}{\beta}) \epsilon \right) \left(\probmap(\inputelement)_{(2)}\right)^{\frac{\beta-1}{\beta}}$. Then, for any $\perturb \in \mathcal{X}$, the following holds,
\begin{equation*}
\norm{\perturb}_p \leq \alpha_p \implies \hypothesis_{\text{rob}}(\inputelement) = \hypothesis_{\text{rob}}(\inputelement + \perturb), 
\end{equation*}
where $\hypothesis_{\text{rob}}(\inputelement) \equaldef \argmax\limits_{k \in [K]}\probmap(\inputelement)_{k}$.
\end{proposition}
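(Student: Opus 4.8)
The plan is to mirror the structure of the proof of Proposition~\ref{prop:modepreservationforTV}, but with the additive total-variation inequality replaced by the multiplicative probability-preservation bound of Proposition~\ref{prop::renyi}. First I would fix $\inputelement, \perturb \in \inputspace$ with $\norm{\perturb}_p \leq \alpha_p$, and write $k^*$ for the index achieving the maximum of $\probmap(\inputelement)$, so that $\probmap(\inputelement)_{k^*} = \probmap(\inputelement)_{(1)}$. Since $\probmap \in \probmapset_{\beta}(\alpha_p,\epsilon)$, we have $D_{\beta}(\probmap(\inputelement),\probmap(\inputelement+\perturb)) \leq \epsilon$, and by the monotonicity of the Renyi divergence the bound $D_{\beta} \leq \epsilon$ applies in whatever direction I need it. The goal is to show that for every $k \neq k^*$ we still have $\probmap(\inputelement+\perturb)_{k^*} \geq \probmap(\inputelement+\perturb)_{k}$, which forces $\hypothesis_{\text{rob}}(\inputelement+\perturb) = k^* = \hypothesis_{\text{rob}}(\inputelement)$.

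The key step is to produce, from Proposition~\ref{prop::renyi}, two-sided control of each perturbed coordinate $\probmap(\inputelement+\perturb)_k$ in terms of the unperturbed $\probmap(\inputelement)_k$. Applying the proposition with the roles of the two measures chosen so that the singleton event $Z = \{k\}$ is evaluated, one direction gives an upper bound of the form $\probmap(\inputelement+\perturb)_k \leq \left(e^{\epsilon}\,\probmap(\inputelement)_k\right)^{\frac{\beta-1}{\beta}}$ (or its inverse-exponent analogue), and swapping the measures gives a matching lower bound $\probmap(\inputelement+\perturb)_{k^*} \geq e^{-\frac{\beta}{\beta-1}\epsilon}\,\big(\probmap(\inputelement)_{k^*}\big)^{\frac{\beta}{\beta-1}}$-type estimate. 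I would then chain the lower bound on the $k^*$ coordinate against the upper bound on each competitor coordinate $k$. The hypothesis $\big(\probmap(\inputelement)_{(1)}\big)^{\frac{\beta}{\beta-1}} \geq \exp\!\big((2-\tfrac{1}{\beta})\epsilon\big)\big(\probmap(\inputelement)_{(2)}\big)^{\frac{\beta-1}{\beta}}$ is precisely the algebraic condition that makes the lower bound on $\probmap(\inputelement+\perturb)_{k^*}$ dominate the upper bound on $\probmap(\inputelement+\perturb)_{k}$ for the second-largest competitor, and hence for all competitors since $\probmap(\inputelement)_{k} \leq \probmap(\inputelement)_{(2)}$.

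The main obstacle I anticipate is bookkeeping the exponents correctly: Proposition~\ref{prop::renyi} carries the asymmetric power $\frac{\beta-1}{\beta}$, so the two applications (one per coordinate, in opposite directions) produce mismatched exponents that must be reconciled before the comparison goes through. The precise reason the threshold reads $2-\tfrac{1}{\beta}$ rather than a single $\epsilon$ is that one factor of $\epsilon$ arises from bounding $\probmap(\inputelement+\perturb)_{k^*}$ from below and a second (scaled by the exponent $\frac{\beta-1}{\beta}$) from bounding $\probmap(\inputelement+\perturb)_{k}$ from above; I would track both contributions carefully and verify that they combine to the stated constant. Once the two-sided estimate and this exponent reconciliation are in place, the conclusion $\argmax_{k\in[K]}\probmap(\inputelement)_k = \argmax_{k\in[K]}\probmap(\inputelement+\perturb)_k$ follows immediately, exactly as in the total-variation case.
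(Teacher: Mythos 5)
Your plan is essentially the paper's own proof: it likewise applies Proposition~\ref{prop::renyi} to singleton events in both directions to obtain $\probmap(\inputelement+\perturb)_{k^*} \geq e^{-\epsilon}\left(\probmap(\inputelement)_{k^*}\right)^{\frac{\beta}{\beta-1}}$ and $\probmap(\inputelement+\perturb)_{k} \leq e^{\frac{\beta-1}{\beta}\epsilon}\left(\probmap(\inputelement)_{k}\right)^{\frac{\beta-1}{\beta}}$, then chains them through the hypothesis exactly as you describe, with the constant $(2-\frac{1}{\beta})\epsilon$ arising as $\epsilon + \frac{\beta-1}{\beta}\epsilon$ precisely per your accounting. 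Two small fixes when you write it out: your hedged intermediate constant $e^{-\frac{\beta}{\beta-1}\epsilon}$ must resolve to $e^{-\epsilon}$ (otherwise the threshold would not match), and the two-directional use of the asymmetric Renyi bound is justified by invoking the robustness definition at both $\inputelement$ and $\inputelement+\perturb$ with perturbations $\perturb$ and $-\perturb$, not by monotonicity in $\beta$ — a point the paper also leaves implicit.
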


\begin{proof}
Let $\inputelement, \perturb \in \inputspace$ such that  $\norm{\perturb}_p \leq \alpha_p$ and $\probmap \in \probmapset_{\beta}\left(\alpha_p,\epsilon \right)$ such that $$\left(\probmap(\inputelement)_{(1)}\right)^{\frac{\beta}{\beta - 1}} \geq \exp\left( (2-\frac{1}{\beta})\epsilon \right) \left(\probmap(\inputelement)_{(2)}\right)^{\frac{\beta-1}{\beta}}.$$
Then by definition of $ \probmapset_{\beta}\left(\alpha_p,\epsilon \right)$, we have $$D_{\beta}(\probmap(\inputelement),\probmap( \inputelement+\perturb)) \leq \epsilon.$$ Furthermore, by using Proposition~\ref{prop::renyi}, for any $k \in \{1 ,\dots, K \}$ we have  $$ (*) \probmap(\inputelement)_{k}\leq\left(\exp(\epsilon)\probmap(\inputelement+\perturb)_k\right)^{\frac{\beta-1}{\beta}}\text{ and } (**) \probmap(\inputelement+\perturb)_{k}\leq\left(\exp(\epsilon)\probmap(\inputelement)_k\right)^{\frac{\beta-1}{\beta}} \equationspace.$$ 
Let us denote $k^*$ the index such that $\probmap(\inputelement)_{k^*} =\probmap(\inputelement)_{(1)} $. Then using $(*)$ we get $$\probmap(\inputelement+\perturb)_{k^*} \geq \exp(-\epsilon)(\probmap(\inputelement)_{k^*})^{\frac{\beta}{\beta-1}}.$$
Furthermore for any $k \in \{1, \dots ,K\}$ where $k \neq k^*$, we can use the assumption we made on $\probmap$ to get $$\exp(-\epsilon)(\probmap(\inputelement)_{k^*})^{\frac{\beta}{\beta-1}}\geq\exp(\frac{\beta-1}{\beta}\epsilon)(\probmap(\inputelement)_k)^{\frac{\beta-1}{\beta}}.$$
Finally, using $(**)$ we have
$$\exp(\frac{\beta-1}{\beta}\epsilon)(\probmap(\inputelement)_k)^{\frac{\beta-1}{\beta}} \geq\probmap(\inputelement + \perturb)_{k}.$$
The above gives us $\argmax\limits_{k \in [K] }\probmap(\inputelement)_{k}=\argmax\limits_{k \in [K] }\probmap(\inputelement+\perturb)_{k}$. This concludes the proof.
\end{proof}

Coming back to the decomposition in Equation~\eqref{eq:decomposition}, with the above result, we can bound the risk the adversary induces with non-zero perturbations by the mass of points on which the classifier $\hypothesis_{\text{rob}}$ gives the good response but based on a low probability of success, \ie~with small confidence

\begin{equation}
\label{eq:premiseGeneralizationRandomizedSmoothing}
    \advRiskzero(\probmap) \leq \proba_{(\inputelement,y)\sim \groundDistrib} \left[ \hypothesis_{\text{rob}}(\inputelement)=y \emph{ and } \probmap(\inputelement)_{(1)} < \probmap(\inputelement)_{(2)} +2 \epsilon \right]. 
\end{equation}

This means that the only points on which the adversary may induce misclassification are the points on which $\probmap$ already has a high risk. Once more, this says something fundamental about the behavior of robust randomized classifiers. On undefended models, the adversary could change the decision on any point it wanted; now it is limited to changing points on which the classifier is already inaccurate. This considerably mitigates the threat model we should consider. Furthermore, for any deterministic classifier designed as in Equation~\eqref{eq:modeandRandomizedSmoothing}, we can also bound the maximal loss of accuracy under attack the classifier may suffer. This bound may, however, be harder to evaluate since it now depends on both the classifier and the dataset distribution. The classifier we define in Equation~\eqref{eq:modeandRandomizedSmoothing} and the mode preservation property of $\probmap$ are closely related to provable defenses based on randomized smoothing. The core idea of randomized smoothing is to take a hypothesis $\hypothesis$ and to build a robust classifier that writes 
\begin{equation}
    c_{rob}: \inputelement \mapsto \argmax\limits_{k \in [K]}\proba_{\vectorsym{z} \sim \mathcal{N}\left(0,\sigma^2 I\right)}\left[\hypothesis(\inputelement+\vectorsym{z}) = k\right]\equationspace.
\end{equation}

From a probabilistic point of view, for any input $\inputelement$, randomized smoothing amounts to output the most probable class of the probability measure $\probmap(\inputelement) \equaldef \hypothesis \# \mathcal{N}\left(\inputelement,\sigma^2 I\right)$.
%\begin{equation}
    %\probmap(\inputelement) \equaldef  \expect_{\vectorsym{z} \sim \mathcal{N}\left(0,\sigma^2 I\right)} \big[ \left[\hypothesis_1(\inputelement+\vectorsym{z}), \dots, \hypothesis_K(\inputelement+\vectorsym{z})\right]^\intercal \big].
%\end{equation}
Hence, randomized smoothing uses the mode preservation property of $\probmap$ to build a provably robust (deterministic) classifier. Therefore, the above results (Proposition~\ref{prop:modepreservationforTV} and Equation~\ref{eq:premiseGeneralizationRandomizedSmoothing}) also hold for provable defenses based on randomized smoothing. Studying randomized smoothing from our point of view could give an interesting new perspective on that method. So far no results have been published on the generalisation gap of this defense in the adversarial setting. We could devise generalization bounds by similarity with our analysis. 
Furthermore, the probabilistic interpretation stresses that randomized smoothing is somewhat restrictive since it only considers probability measures which are the expectation on a simple noise injection scheme.
The mode preservation property explains the behavior of randomized smoothing, but also presents fundamental properties of randomized defenses that could be used to construct more general defense schemes.

\section{Numerical validations: Gaussian Noise and $\ell_2$ adversary}
\label{section::Experiments}

To illustrate our findings, we train randomized neural networks with Gaussian pre-processing during training and inference on CIFAR-10 and CIFAR-100. Based on this randomized classifier, we study the impact of randomization on the standard accuracy of the network, and observe the theoretical trade-off between accuracy and robustness.

\subsection{Architecture and training procedure}
All the neural networks we use in this section are WideResNets~\cite{ZagoruykoK16} with $28$ layers, a widen factor of $10$, a dropout factor of $0.3$ and LeakyRelu activation with a $0.1$ slope. To train an undefended standard classifier we use the following hyper-parameters. 
        \begin{itemize}
            \item \textit{Number of Epochs:} 200
            \item \textit{Batch size:} 400
            \item \textit{Loss function:} Cross Entropy Loss
            \item \textit{Optimizer :} Stochastic gradient descent algorithm with momentum $0.9$, weight decay of $2\times10^{-4}$ and a learning rate that decreases during the training as follows: 
            \begin{align}
                lr = \left\{
                        \begin{matrix}
                        &0.1 & \text{if} & 0 &\leq & \text{epoch} & <& 60\\
                        &0.02 & \text{if} & 60 &\leq & \text{epoch} & <& 120\\
                        &0.004 & \text{if} & 120 &\leq & \text{epoch} & <& 160\\
                        &0.0008 & \text{if} & 160 &\leq & \text{epoch} & <& 200.\\
                        \end{matrix} \notag
                        \right.
            \end{align}
        \end{itemize}

To transform these standard networks into randomized classifiers, we inject noise drawn from Gaussian distributions, each with various standard deviations directly on the image before passing it through the network. Both during training and test, for computational efficiency, we evaluate the performance of the the algorithm over a single run for every images; hence no Monte Carlo estimator is used. However, in practice, the test-time accuracy is stable when evaluated over the entire test dataset.

\subsection{Results}
 
Figures~\ref{fig:GaussNoiseaccuracy} and~\ref{fig:GaussNoiseBound} show the accuracy and the minimum level of accuracy under attack of our randomized neural network for several levels of injected noise. We can see (Figure~\ref{fig:GaussNoiseaccuracy}) that the precision decreases as the noise intensity grows. In that sense, the noise must be calibrated to preserve both accuracy and robustness against adversarial attacks. This is to be expected, because the greater the entropy of the classifier, the less precise it gets.

 \begin{figure}[!ht]
\centering
    \includegraphics[width=\textwidth]{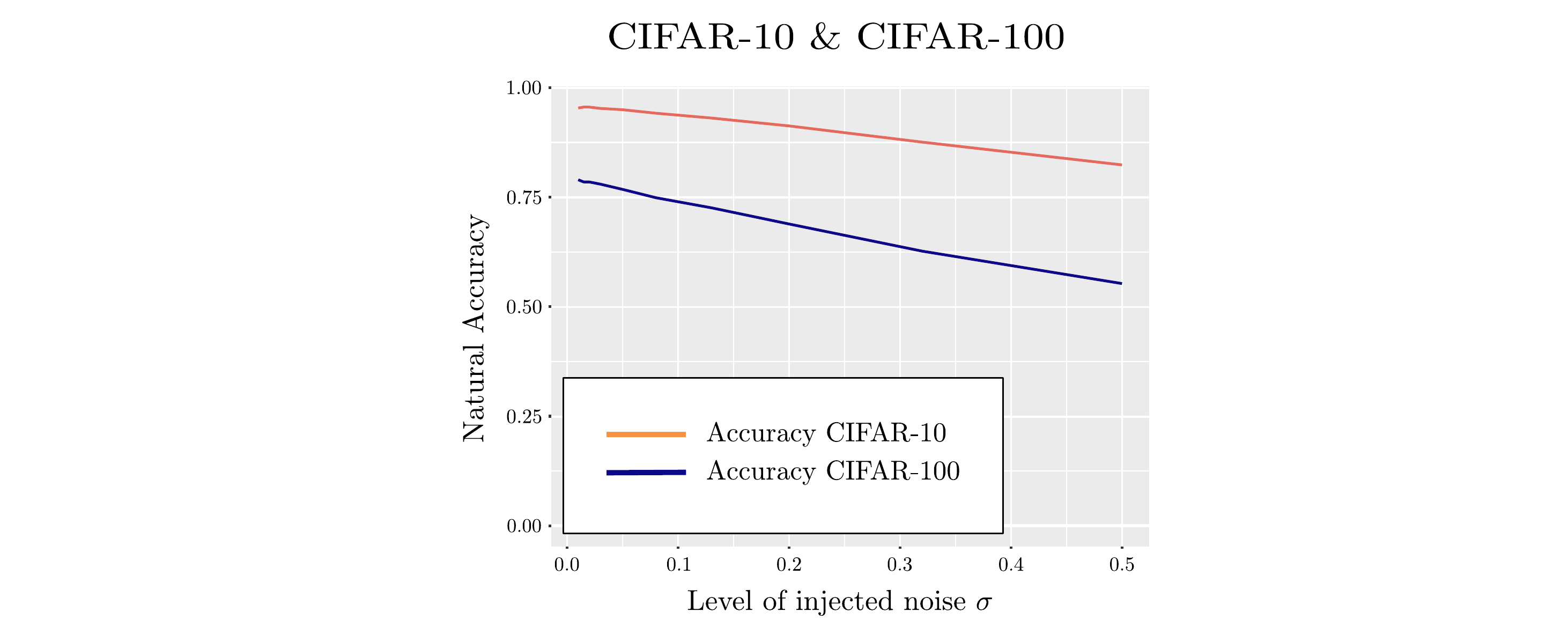}
\caption{Impact of the standard deviation of the Gausian noise on accuracy in a randomized model on CIFAR-10 and CIFAR-100 dataset.}
\label{fig:GaussNoiseaccuracy}
\end{figure}

Furthermore, when injecting Gaussian noise as a defense mechanism, the resulting randomized network $\probmap$ is both $(\alpha_2, \frac{(\alpha_2)^2}{2 \sigma})$-robust \wrt~$D_1$ and $(\alpha_2,2 \Phi\left( \frac{\alpha_2}{2 \sigma} \right) - 1)$-robust \wrt~$D_{TV}$ against $\ell_2$ adversaries. Therefore thanks to Theorems~\ref{th:TVboundRisk} and~\ref{th:RenyiboundRisk} we have that
\begin{align}
    \advRisk(\probmap; \alpha_2) - \Risk(\probmap) &\leq 2 \Phi\left( \frac{\alpha_2}{2 \sigma} \right) - 1, \text{\emph{ and}} \label{eq:boundTVriskgap}\\
    \advRisk(\probmap; \alpha_2) - \Risk(\probmap) &\leq 1-e^{-\frac{(\alpha_2)^2}{2 \sigma}} \expect_{\inputelement \sim \mathcal{D}_{\mid \inputspace}}\left[e^{-H(\probmap(\inputelement))}\right].\label{eq:boundRenyiriskgap}
\end{align}

\begin{figure}[!ht]
\centering
    \includegraphics[width=\textwidth]{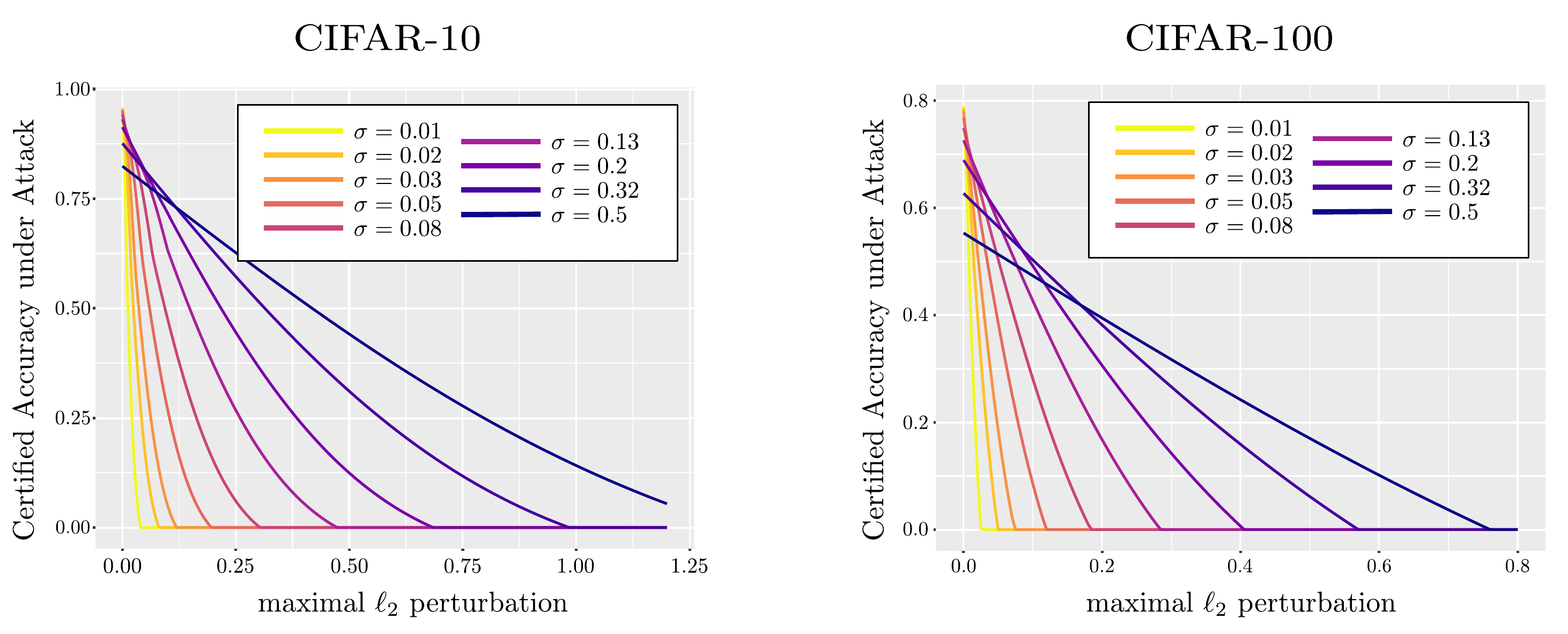}
\caption{Guaranteed accuracy of different randomized models with Gaussian noise given the $\ell_2$ norm of the adversarial perturbations.}
\label{fig:GaussNoiseBound}
\end{figure}

Figure~\ref{fig:GaussNoiseBound} illustrates the theoretical lower bound on accuracy under attack  (based on the minimum gap between Equations~\eqref{eq:boundTVriskgap} and~\eqref{eq:boundRenyiriskgap}) for different standard deviations. The term in entropy has been estimated using a Monte Carlo method with $10^4$ simulations. The trade-off between accuracy and robustness appears with respect to the noise intensity. With small noises, the accuracy is high, but the guaranteed accuracy drops fast with respect to the magnitude of the adversarial perturbation. Conversely, with bigger noises, the accuracy is lower but decreases slowly with respect to the magnitude of the adversarial perturbation. Overall, we get strong accuracy guarantees against small adversarial perturbations, but when the perturbation is bigger than $0.5$ on CIFAR-10 (resp. $0.3$ on CIFAR-100, the guarantees are still not sufficient).

%\begin{remark}
%Note that the maximal $\ltwo$ perturbation  considered as imperceptible for CIFAR datasets is $\alpha_2= 1.6$. Hence our theoretical bounds are still not sufficient to consider worst case threats.
%\end{remark}

\section{Lesson learned and future work}
\label{section::conclusion}

This paper brings new contributions to the theory of robustness to adversarial attacks. We provided an in depth analysis of randomized classifier, demonstrating their interest to defend against adversarial attacks. We first defined a notion of robustness for randomized classifiers using probability metrics/divergences, namely the total variation distance and the Renyi divergence. Second, we demonstrated that when a randomized classifier complies with this definition of robustness, we can bound their loss of accuracy under attack. We also studied the generalization properties of this class of functions and gave results indicating that robust randomized classifiers can generalize. Finally, we showed that randomized classifiers have a mode preservation property. This presents a fundamental property of randomized defenses that can be used to explain randomized smoothing from a probabilistic point of view.
To support our theoretical findings we presented a simple yet efficient scheme for building robust randomized classifiers. We show that Gaussian noise injection can provide principled robustness against $\ell_2$ adversarial attacks. We ran a set of experiments on CIFAR-10 and CIFAR-100 using Gaussian noise injection with advanced neural network architectures to build accurate models with controlled loss of accuracy under attack.

Future work will focus on studying the combination of randomization with more sophisticated defenses and on devising new tight bounds on the adversarial generalization and the adversarial risk gap of randomized classifiers. Based on the connections we established we randomized smoothing in Section~\ref{sec:modepreservationendRS}, we will also aim at devising bounds on the gap between the standard and adversarial risks for this defense. Another interesting direction would be to show that the classifiers based on randomized smoothing have a generalization gap similar to the classes of randomized classifiers we studied. 

\clearpage

\appendix
\section{Discussion on the metric/divergence one should consider}
\label{appendix::discussion}

As mentioned earlier in this paper, the choice of the metric/divergence is crucial as it characterizes the notion of adversarial robustness we are examining. We focus on the total variation distance and Renyi divergence, but the question of whether these metrics/divergences are more appropriate than others remains open. It should be noted, however, that our definition of robustness is monotonous depending on the metric/divergence we use.

\begin{proposition}[Monotonicity of the robustness]
\label{th::PropimpliesRobustness}
Let $\probmap$ be a randomized classifier, and let  $D$ and $D'$ be two divergences/metrics on $\probset(\outputspace)$.
If there exists a non decreasing function $ f: \R \to \R$ such that  $\forall \rho ,\rho' \in \probset(\outputspace)$, $D(\rho  , \rho') \leq f(D'(\rho  , \rho')) $, then the following assertion holds. 
$$\probmap \text{ is } (\alpha_p, \epsilon)\textnormal{-robust \wrt~}D' \implies \probmap \text{ is } (\alpha_p, f(\epsilon))\text{-robust \wrt~} D.$$
\end{proposition}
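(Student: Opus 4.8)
The plan is to unfold both robustness definitions and chain the pointwise comparison inequality with the monotonicity of $f$. First I would fix an arbitrary pair $\inputelement, \perturb \in \inputspace$ with $\norm{\perturb}_p \leq \alpha_p$; the whole argument reduces to controlling $D\left(\probmap(\inputelement), \probmap(\inputelement + \perturb)\right)$ for such a pair, since the conclusion is exactly that this quantity is bounded by $f(\epsilon)$ for every admissible perturbation.

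The key step is to instantiate the hypothesis $D(\rho, \rho') \leq f\!\left(D'(\rho, \rho')\right)$ at the two probability measures $\rho = \probmap(\inputelement)$ and $\rho' = \probmap(\inputelement + \perturb)$ in $\probset(\outputspace)$, which gives
\begin{equation*}
D\left(\probmap(\inputelement), \probmap(\inputelement + \perturb)\right) \leq f\!\left(D'\left(\probmap(\inputelement), \probmap(\inputelement + \perturb)\right)\right).
\end{equation*}
Then, because $\probmap$ is $(\alpha_p, \epsilon)$-robust \wrt~$D'$ and $\norm{\perturb}_p \leq \alpha_p$, the inner argument satisfies $D'\left(\probmap(\inputelement), \probmap(\inputelement + \perturb)\right) \leq \epsilon$. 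Applying the fact that $f$ is non-decreasing to this inequality yields $f\!\left(D'\left(\probmap(\inputelement), \probmap(\inputelement + \perturb)\right)\right) \leq f(\epsilon)$.

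Combining the two displayed bounds gives $D\left(\probmap(\inputelement), \probmap(\inputelement + \perturb)\right) \leq f(\epsilon)$. Since $\inputelement$ and $\perturb$ were arbitrary subject to $\norm{\perturb}_p \leq \alpha_p$, this is precisely the statement that $\probmap$ is $(\alpha_p, f(\epsilon))$-robust \wrt~$D$, concluding the proof.

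There is no substantive obstacle here: the result is a direct consequence of composing a pointwise comparison between the two divergences with the monotonicity of $f$, and the only thing to be careful about is the order of the two inequalities—one must apply the comparison bound \emph{before} invoking monotonicity, and ensure the quantification over all admissible $\perturb$ is carried through so that the pointwise bound upgrades to the uniform robustness guarantee.
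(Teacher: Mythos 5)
Your proof is correct and follows essentially the same route as the paper's: instantiate the comparison inequality $D \leq f \circ D'$ at the pair $\probmap(\inputelement)$, $\probmap(\inputelement+\perturb)$, invoke the $(\alpha_p,\epsilon)$-robustness \wrt~$D'$ together with the monotonicity of $f$, and conclude by arbitrariness of $\inputelement$ and $\perturb$. Your explicit remark on the order of the two inequalities and on carrying the quantification over all admissible $\perturb$ is a slightly more careful write-up of the same one-line argument, but there is no substantive difference.
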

 The proof straightforwardly comes from the definition of robustness.
\begin{proof}
Let us consider $\probmap$ a randomized classifier $(\alpha_p, \epsilon)$-robust \wrt~$ D'$. Then for any $\inputelement \sim \groundDistrib$, and $\perturb ~\st~ \norm{\perturb}_p \leq \alpha_p$, since $f$ is non decreasing, we have $$D(\probmap(\inputelement),\probmap(\inputelement +\perturb)) \leq f\left(D'(\probmap(\inputelement),\probmap(\inputelement +\perturb))\right) \leq f\left(\epsilon\right).$$ 
Then $\probmap$ is  $(\alpha_p, f(\epsilon))$-robust \wrt~$D$ which concludes the proof.
\end{proof}

The above result suggests that the different notions of robustness we might conceive are more related than they appear. Here are some of the most classical divergences used in machine learning. Let $\rho,\rho',\nu$ three measures in $\probset(\outputspace)$. We denotes $g$ and $g'$ the probability density functions of $\rho$ and $\rho'$ with respect to $\nu$. Then we can define the \emph{Wasserstein distance} as follows 
\begin{equation}
        D_{W}(\rho  , \rho' ) \equaldef \inf\int_{\mathcal{Y}^{2}} \dist\left( y, y'\right) d\pi(y,y'),
\end{equation}
where $\dist$ is some ground distance on $\outputspace$, and the infimum is taken over all joint distributions $\pi$ in $\probset\left( \mathcal{Y}\times\mathcal{Y} \right)$ with marginals $\rho$ and $\rho'$.  

\begin{remark}
In transportation theory, the Wasserstein distance is solution of the Monge-Kantorovich problem with the cost function $c(y,y') = \dist(y,y')$. Then, the definitions of total variation and Wasserstein distance match when we use the trivial distance $\dist(y,y') = \mathds{1}\{y \neq y'\}$. 
\end{remark}

We also define respectively the \emph{Hellinger distance} and the \emph{Separation distance} as follows.
\begin{align}
    & D_{H}(\rho  , \rho'):= \left[ \int_{\mathcal{Y}} \left(\sqrt{g} - \sqrt{g'} \right)^{2} d \nu \right]^{1/2}.\\
    & D_{S}(\rho  , \rho'):= \sup\limits_{y \in \mathcal{Y}} \left( 1 - \frac{g(y)}{g'(y)} \right). 
\end{align}

\begin{figure}
    \centering
    \includegraphics[width=0.9\textwidth]{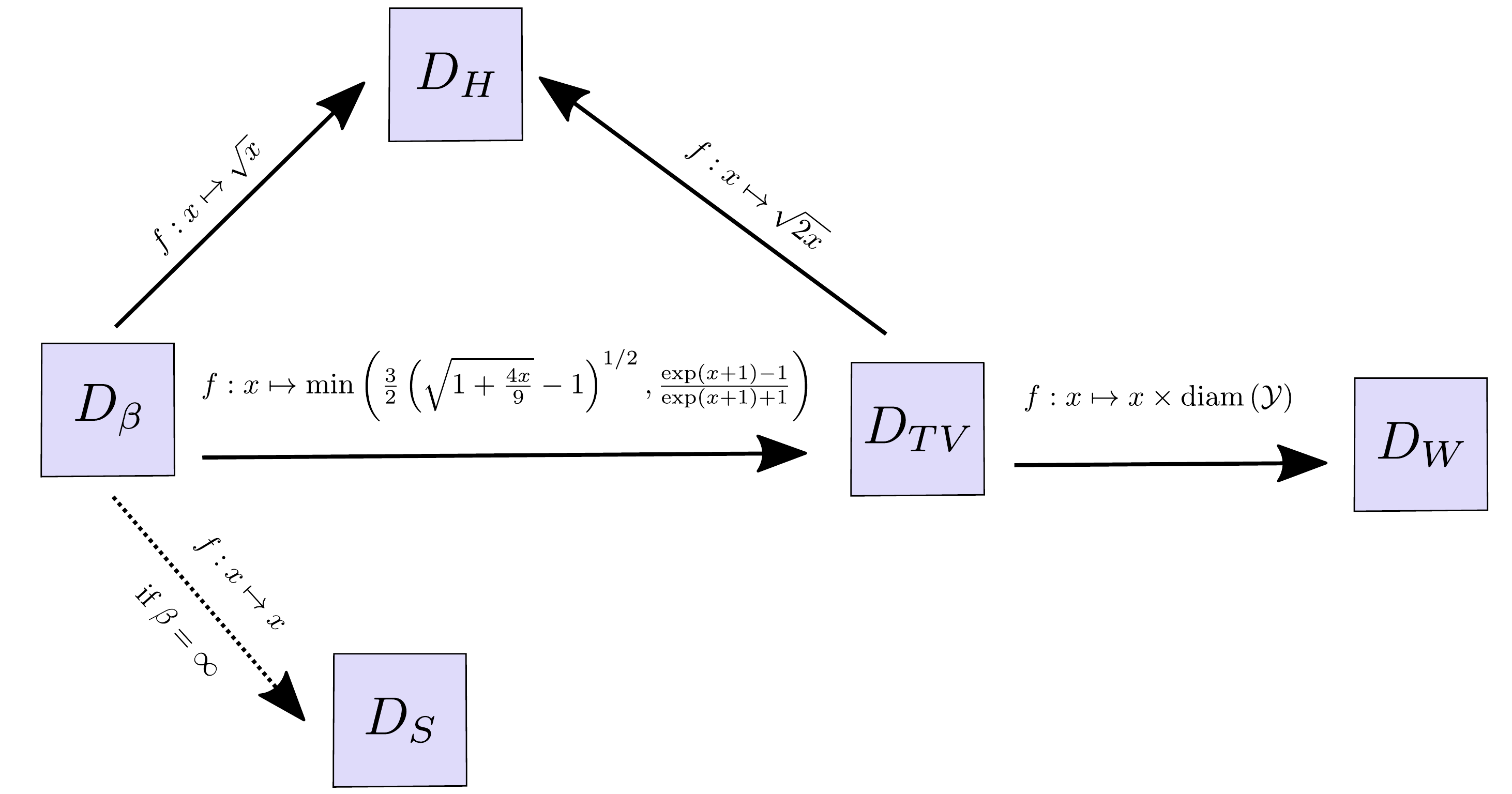}
    \caption{Summary of the relations between the different robustness notions from Propositions~\ref{prop:LinkTV-Wass} and~\ref{prop:LinkRenyiOthers}. }
    \label{fig:Proposition67}
\end{figure}

If we take any of the above metrics/divergences to instantiate a notion of adversarial robustness we might get very different semantics for them. However, we can show that any of these definitions can be covered -- with respect to Proposition~\ref{th::PropimpliesRobustness} -- either by the Renyi or the total variation robustness. 
Figure~\ref{fig:Proposition67} summarizes the links we can make between all these different definitions of robustness, and Propositions~\ref{prop:LinkTV-Wass} and~\ref{prop:LinkRenyiOthers} present the associated results. We can see that the total variation distance and the Renyi divergence are both central since they can cover any of the other robustness notions. This does not mean that they are more appropriate than the others, but at least they are general enough to cover a wide range of possible definitions.

\begin{proposition}
\label{prop:LinkTV-Wass}
Let $\probmap$ be a randomized classifier. If $\probmap$ is $(\alpha_p, \epsilon)$-robust \wrt~$D_{TV}$ then the following assertions hold. 
\begin{itemize}
    \item $\probmap$ is $\left(\alpha_p, \epsilon \times \diam\left( \outputspace \right) \right)$-robust \wrt~$D_{W}$, where $\diam\left( \outputspace \right) \equaldef \max\limits_{y,y' \in \outputspace} \dist(y,y')$.
     \item $\probmap$ is $\left(\alpha_p, \sqrt{2 \epsilon} \right)$-robust \wrt~$D_{H}$.
\end{itemize} 
\end{proposition}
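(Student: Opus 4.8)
The plan is to reduce both bullets to the monotonicity principle of Proposition~\ref{th::PropimpliesRobustness}, by establishing two comparison inequalities between the relevant divergences and $D_{TV}$. Concretely, I would first prove that for any $\rho, \rho' \in \probset(\outputspace)$ one has $D_W(\rho,\rho') \leq \diam(\outputspace)\, D_{TV}(\rho,\rho')$ and $D_H(\rho,\rho') \leq \sqrt{2\, D_{TV}(\rho,\rho')}$. Since $t \mapsto \diam(\outputspace)\, t$ and $t \mapsto \sqrt{2t}$ are both non-decreasing on $\R$, Proposition~\ref{th::PropimpliesRobustness} then immediately upgrades $(\alpha_p,\epsilon)$-robustness \wrt~$D_{TV}$ into $(\alpha_p, \epsilon\,\diam(\outputspace))$-robustness \wrt~$D_W$ and $(\alpha_p, \sqrt{2\epsilon})$-robustness \wrt~$D_H$, which is exactly the claim.

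For the Wasserstein comparison, I would exploit that both $D_W$ and $D_{TV}$ are Monge--Kantorovich optimal-transport costs over the \emph{same} set of couplings $\pi$ with marginals $\rho$ and $\rho'$: $D_W$ uses the ground cost $\dist(y,y')$ and $D_{TV}$ the cost $\mathds{1}\{y \neq y'\}$, as recalled in Section~\ref{section::Preliminaries}. The crucial observation is the pointwise bound $\dist(y,y') \leq \diam(\outputspace)\, \mathds{1}\{y\neq y'\}$, valid because the inequality is trivial when $y=y'$ and follows from the definition of $\diam(\outputspace)$ otherwise. Integrating this against an arbitrary coupling $\pi$ and noting that the bound holds for every such $\pi$, one may take the infimum over couplings on both sides to obtain $D_W \leq \diam(\outputspace)\, D_{TV}$.

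For the Hellinger comparison, I would work with densities $g,g'$ \wrt~a common dominating measure $\nu$. Expanding the square gives $D_H^2 = 2 - 2\int_{\outputspace}\sqrt{g g'}\,d\nu$, while the $\min$-representation of total variation reads $D_{TV} = 1 - \int_{\outputspace}\min(g,g')\,d\nu$. The key step is the elementary pointwise inequality $\sqrt{g g'} \geq \min(g,g')$ (for nonnegative reals the geometric mean dominates the minimum), which upon integration yields $\int\sqrt{g g'}\,d\nu \geq 1 - D_{TV}$, hence $D_H^2 = 2\bigl(1 - \int\sqrt{g g'}\,d\nu\bigr) \leq 2\, D_{TV}$ and therefore $D_H \leq \sqrt{2\, D_{TV}}$.

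The two comparison inequalities are the substance of the argument, and I expect the Hellinger bound to be the more delicate of the two: one must pick the right intermediate quantity (the Bhattacharyya overlap $\int\sqrt{g g'}\,d\nu$) and recognize that it is controlled by the $\min$-overlap appearing in $D_{TV}$, rather than attacking $\int|g-g'|\,d\nu$ directly via Cauchy--Schwarz, which would only deliver the reverse bound $D_{TV} \leq D_H$. Once both inequalities are in hand, the conclusion is a mechanical application of Proposition~\ref{th::PropimpliesRobustness}.
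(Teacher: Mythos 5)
Your proposal is correct and follows essentially the same route as the paper: both reduce the claim to the monotonicity principle of Proposition~\ref{th::PropimpliesRobustness} via the same two comparison inequalities, $D_W(\rho,\rho') \leq \diam(\outputspace)\, D_{TV}(\rho,\rho')$ and $D_H(\rho,\rho') \leq \sqrt{2\, D_{TV}(\rho,\rho')}$, applied with the non-decreasing functions $t \mapsto \diam(\outputspace)\, t$ and $t \mapsto \sqrt{2t}$. The only difference is that the paper simply cites these bounds from Gibbs and Su, whereas you derive them yourself --- and both derivations are valid (the pointwise cost comparison integrated against an arbitrary coupling for $D_W$, and the Bhattacharyya-overlap bound $\int \sqrt{g g'}\, d\nu \geq \int \min(g,g')\, d\nu = 1 - D_{TV}$ for $D_H$) --- so your version is merely a self-contained rendering of the same argument.
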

\begin{proof}
Let us consider $\rho$ and $\rho' \in \probset\left(\outputspace\right)$. Thanks to~\cite{AGibbsMetrics2002} we have
\begin{itemize}
    \item $D_W(\rho,\rho') \leq \diam(\mathcal{Y}) D_{TV}(\rho,\rho')$.
    \item $D_H(\rho,\rho') \leq  \sqrt{2 D_{TV}(\rho,\rho')}$.
\end{itemize}
Hence, by using Proposition~\ref{th::PropimpliesRobustness} respectively with $f: x \mapsto \diam(\mathcal{Y}) x$ and  $f: x \mapsto \sqrt{2x}$ we get the expected results.
\end{proof}

\begin{proposition}
\label{prop:LinkRenyiOthers}
Let $\probmap$ be a randomized classifier. If $\probmap$ is $(\alpha_p, \epsilon)$-robust \wrt~$D_{\beta}$ then the following assertions hold. 
\begin{itemize}
\item $\probmap$ is $(\alpha_p, \epsilon')$-robust \wrt~$D_{TV}$ with $\epsilon' = \min \left(\frac{3}{2}\left(\sqrt{1 + \frac{4\epsilon}{9}} - 1\right)^{1/2}, \frac{\exp(\epsilon +1) -1}{\exp(\epsilon +1) +1}\right)$.
\item $\probmap$ is $(\alpha_p, \sqrt{\epsilon})$-robust \wrt~$D_H$.
\item If $\beta =\infty$, then $\probmap$ is $(\alpha_p, \epsilon)$ robust \wrt~$D_S$.
\end{itemize}
\end{proposition}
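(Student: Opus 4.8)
The plan is to handle all three items uniformly through the monotonicity principle of Proposition~\ref{th::PropimpliesRobustness}: for each target divergence I would produce a non-decreasing $f \colon \R \to \R$ satisfying $D(\rho, \rho') \le f\!\left(D_\beta(\rho, \rho')\right)$ for every $\rho, \rho' \in \probset(\outputspace)$, after which $(\alpha_p, \epsilon)$-robustness \wrt~$D_\beta$ transfers directly to $(\alpha_p, f(\epsilon))$-robustness \wrt~$D$.

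The total-variation item requires no new work: Proposition~\ref{prop:Inequality-TV-Renyi} already supplies $D_{TV}(\rho, \rho') \le f\!\left(D_\beta(\rho, \rho')\right)$ for exactly the $f(x) = \min\!\left(\frac{3}{2}\left(\sqrt{1 + \frac{4x}{9}} - 1\right)^{1/2}, \frac{e^{x+1}-1}{e^{x+1}+1}\right)$ appearing in the statement. Since both branches of the minimum are non-decreasing in $x$, so is $f$, and Proposition~\ref{th::PropimpliesRobustness} yields the claimed $\epsilon'$.

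For the Hellinger item I would first use that the Renyi divergence is non-decreasing in its order, so $D_1(\rho, \rho') \le D_\beta(\rho, \rho')$ for $\beta \ge 1$, and then invoke the bound $D_H^2 \le D_1$. I would obtain the latter by expanding $D_H(\rho, \rho')^2 = 2 - 2\int \sqrt{g g'}\, d\nu$ and applying $e^{t} \ge 1 + t$ to $\sqrt{g'/g}$, which gives $\int \sqrt{g g'}\, d\nu \ge 1 - \tfrac12 D_1(\rho, \rho')$ and hence $D_H(\rho, \rho')^2 \le D_1(\rho, \rho')$. Chaining the two inequalities produces $D_H(\rho, \rho') \le \sqrt{D_\beta(\rho, \rho')}$, so Proposition~\ref{th::PropimpliesRobustness} with the non-decreasing $f(x) = \sqrt{x}$ delivers $(\alpha_p, \sqrt{\epsilon})$-robustness \wrt~$D_H$.

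The separation-distance item is where I expect the only real difficulty, because $D_S$ and $D_\infty$ are asymmetric and their natural comparison reverses the order of arguments. Setting $t = g'(y)/g(y)$, the elementary inequality $1 - 1/t \le \log t$ gives $1 - g(y)/g'(y) \le \log\!\left(g'(y)/g(y)\right)$ pointwise, and taking suprema yields $D_S(\rho, \rho') \le D_\infty(\rho', \rho)$ --- with the arguments swapped, so Proposition~\ref{th::PropimpliesRobustness} does not apply verbatim. I would close this gap using the symmetry of the perturbation ball $B_p(\alpha_p)$: assuming $\probmap$ is $(\alpha_p, \epsilon)$-robust \wrt~$D_\infty$, applying the robustness definition to the pair $(\inputelement + \perturb, -\perturb)$ --- admissible since $\norm{-\perturb}_p = \norm{\perturb}_p \le \alpha_p$ --- gives $D_\infty(\probmap(\inputelement + \perturb), \probmap(\inputelement)) \le \epsilon$. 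Combining this with the pointwise comparison yields $D_S(\probmap(\inputelement), \probmap(\inputelement + \perturb)) \le \epsilon$ for every admissible $\perturb$, which is precisely $(\alpha_p, \epsilon)$-robustness \wrt~$D_S$ and completes the argument.
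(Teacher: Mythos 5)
Your proposal is correct, and it follows the paper's overall strategy: transfer robustness through the monotonicity result (Proposition~\ref{th::PropimpliesRobustness}) combined with comparison inequalities between divergences, with the total variation item handled identically via Proposition~\ref{prop:Inequality-TV-Renyi}. You deviate from the paper in two places, both to your credit. For the Hellinger item, the paper simply cites Gibbs and Su for $D_H(\rho,\rho') \leq \sqrt{D_1(\rho,\rho')}$ and then uses monotonicity of $D_\beta$ in $\beta$; you instead derive $D_H^2 \leq D_1$ from scratch by writing $D_H^2 = 2 - 2\int \sqrt{g g'}\, d\nu$ and applying $e^t \geq 1+t$ to $\sqrt{g'/g}$, which gives $\int \sqrt{gg'}\, d\nu \geq 1 - \tfrac12 D_1(\rho,\rho')$ --- a correct and self-contained replacement for the citation. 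For the separation item, the difference is more substantive: the paper's proof takes $D_\infty(\rho,\rho') = \sup_{B} \left|\ln\left(\rho(B)/\rho'(B)\right)\right|$ \emph{with an absolute value}, which makes $D_\infty$ symmetric in its arguments, so the pointwise bound $1 - x \leq \left|\ln x\right|$ yields $D_S(\rho,\rho') \leq D_\infty(\rho,\rho')$ directly and Proposition~\ref{th::PropimpliesRobustness} applies verbatim with $f(x)=x$. You instead worked with the standard one-sided definition of $D_\infty$, correctly noted that the elementary inequality then only gives the argument-swapped bound $D_S(\rho,\rho') \leq D_\infty(\rho',\rho)$, and repaired this by applying the robustness definition at the pair $(\inputelement+\perturb, -\perturb)$, exploiting the symmetry of the $\ell_p$ ball. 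Your route is slightly longer but strictly more robust: it proves the claim even under the asymmetric definition of $D_\infty$, where the paper's one-line appeal to Proposition~\ref{th::PropimpliesRobustness} would not literally apply. (The only pedantic caveat, which the paper shares, is that applying robustness at $\inputelement+\perturb$ with perturbation $-\perturb$ implicitly treats $\inputspace$ as closed under these operations; the paper makes the same tacit assumption throughout, e.g.\ in its covering arguments, so this is not a gap relative to the paper's own standard of rigor.)
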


\begin{proof}
1) First, let us suppose that $\beta \geq 1$. Thanks to Proposition~\ref{prop:Inequality-TV-Renyi} and to~\cite{AGibbsMetrics2002}, for any $\rho, \rho' \in \probset\left( \outputspace \right)$ we have
\begin{itemize}
\item $D_H(\rho,\rho') \leq \sqrt{ D_{1}(\rho,\rho') } \leq \sqrt{ D_{\beta}(\rho,\rho') } $ \quad \text{(see~\cite{AGibbsMetrics2002})}.
\item $D_{TV}(\rho,\rho') \leq \min \left(\frac{3}{2}\left(\sqrt{1 + \frac{4D_{\beta}(\rho,\rho') }{9}} - 1\right)^{1/2}, \frac{\exp(D_{\beta}(\rho,\rho')  +1) -1}{\exp(D_{\beta}(\rho,\rho')  +1) +1}\right)$ \text{(Prop.~\ref{prop:Inequality-TV-Renyi})}.
\end{itemize}
Hence, by using Proposition~\ref{th::PropimpliesRobustness}, as above, we get the expected results.  

\noindent 2) Now let us suppose that $\beta = \infty$. By definition of the supremum divergence, we have $$
D_{\infty}(\rho,\rho') = \sup_{B \subset \mathcal{Y}} \ \left|\ln\frac{\rho(B)}{\rho'(B)}\right|.$$
Furthermore, note that the function $x \mapsto  1-x - \left|\ln(x)\right|$ is negative on $\mathbb{R}$, therefore for any $y \in \mathcal{Y}$ one has $$1-\frac{\rho(y)}{\rho'(y)} \leq \left|\ln\frac{\rho(y)}{\rho'(y)}\right|.$$ 
Since the above inequality is true for any $y \in \outputspace$, we have $$D_S\left(\rho,\rho'\right) = \sup_{y \in \mathcal{Y}}\left(1-\frac{\rho(y)}{\rho'(y)} \right) \leq \sup_{y \in \mathcal{Y}}\left|\ln\frac{\rho(y)}{\rho'(y)}\right| \leq \sup_{B \subset \mathcal{Y} }\left|\ln\frac{\rho(B)}{\rho'(B)}\right| = D_{\infty}(\rho,\rho').$$ 
Finally, by using Proposition~\ref{th::PropimpliesRobustness} with $f: x \mapsto x$ we get the expected results.
\end{proof}

\newpage
\bibliography{biblio}
\bibliographystyle{abbrv}
\newpage

\end{document}